
\typeout{KR2022 Instructions for Authors}


\documentclass{article}
\pdfpagewidth=8.5in
\pdfpageheight=11in

\usepackage{kr}

\usepackage{times}
\usepackage{soul}
\usepackage{url}
\usepackage[hidelinks]{hyperref}
\usepackage[utf8]{inputenc}
\usepackage[small]{caption}
\usepackage{graphicx}
\usepackage{amsmath}
\usepackage{amsthm}
\usepackage{booktabs}
\usepackage{algorithm}
\usepackage{algorithmic}
\urlstyle{same}

\usepackage{todonotes}
\usepackage{amsfonts}
\usepackage{amssymb}
\usepackage{paralist}
\usepackage{stmaryrd}
 \pagestyle{plain}



\newtheorem{example}{Example}
\newtheorem{theorem}{Theorem}
\newtheorem{lemma}{Lemma}
\newtheorem{definition}{Definition}

\newcommand{\rom}[1]{\uppercase\expandafter{\romannumeral #1\relax}}


\usepackage{amsmath,nccmath}
\usepackage{xspace}
\usepackage{xcolor}

\newcommand{\myi}{(\emph{i})\xspace}
\newcommand{\myii}{(\emph{ii})\xspace}
\newcommand{\myiii}{(\emph{iii})\xspace}

\newcommand{\A}{\mathcal{A}}

\renewcommand{\L}{\mathcal{L}}

\renewcommand{\P}{\mathcal{P}}

\newcommand{\T}{\mathcal{T}}
\newcommand{\U}{\mathcal{U}}

\newcommand{\X}{\mathcal{X}}
\newcommand{\Y}{\mathcal{Y}}

\newcommand{\LTL}{{\sc ltl}\xspace}
\newcommand{\LTLf}{{\sc ltl}$_f$\xspace}



\newcommand{\true}{\mathit{true}}

\newcommand{\false}{\mathit{false}}

\newcommand{\Next}{\raisebox{-0.27ex}{\LARGE$\circ$}}
\newcommand{\Wnext}{\raisebox{-0.27ex}{\LARGE$\bullet$}}
\newcommand{\Until}{\mathop{\U}}



\newcommand{\trace}{{\mathsf{Trace}}}
\newcommand{\play}{{\mathsf{Play}}}
\newcommand{\wine}{\mathsf{Env}}
\newcommand{\wina}{\mathsf{Agn}}
\newcommand\run[1]{{\sf{Run}{#1}}}


\newcommand{\LTLftitle}{\textbf{\textsc{ltl}$_f$}\xspace}





\newcommand{\reach}{{\operatorname{Reach}}\xspace}
\newcommand{\safe}{{\operatorname{Safe}}\xspace}

\newcommand{\reachsafe}{{\operatorname{Reach--Safe}}\xspace}

\newcommand{\SetN}{\mathbb{N}}


\newcommand{\env}{{\operatorname{e}\xspace}}

\newcommand{\stopact}{\mathbf{stop}}

\newcommand{\stenv}{\gamma}

\newcommand{\stag}{{\sigma}}

\newcommand{\task}{\varphi_{\mathit{task}}}

\newcommand{\dutytask}{\varphi_{\operatorname{d}}}
\newcommand{\rightstask}{\varphi_{\operatorname{r}}}

\newcommand{\fdutytask}{\varphi_{\operatorname{fd}}}
\newcommand{\frightstask}{\varphi_{\operatorname{fr}}}

\newcommand{\fd}{{\operatorname{fd}}}
\newcommand{\fr}{{\operatorname{fr}}}

\newcommand{\ls}{\mathit{last}\xspace}

\newcommand{\DA}{{\sc{DA}}\xspace}
\newcommand{\afh}{{\operatorname{af(h)}\xspace}}
\newcommand{\hP}{{\hat{\P}}}

\newcommand{\duties}{{\operatorname{d}\xspace}}
\newcommand{\rights}{{\operatorname{r}\xspace}}


\pdfinfo{
/TemplateVersion (KR.2022.0)
/Title (Act for Your Duties but Maintain Your Rights)
/Author (Shufang Zhu and Giuseppe De Giacomo)
}

\title{Act for Your Duties but Maintain Your Rights}

\author{%
Shufang Zhu\footnote{Corresponding Author}
\and
Giuseppe De Giacomo\\
\affiliations
Sapienza University of Rome, Rome, Italy\\
\emails
\{zhu,degiacomo\}@diag.uniroma1.it
}

\begin{document}

\maketitle


\begin{abstract}
    Most of the synthesis literature has focused on studying how to synthesize a strategy to fulfill a task. 
    This task is a duty for the agent. In this paper, we argue that intelligent agents should also be equipped with rights, that is, tasks that the agent itself can choose to fulfill (e.g., the right of recharging the battery). The agent should be able to maintain these rights while acting for its duties.  
    We study this issue in the context of \LTLf synthesis: we give duties and rights in terms of \LTLf specifications, and synthesize a suitable strategy to achieve the duties that can be modified on-the-fly to achieve also the rights, if the agent chooses to do so. We show that handling rights does not make synthesis substantially more difficult, although it requires a more sophisticated solution concept than standard \LTLf synthesis.
    We also extend our results to the case in which further duties and rights are given to the agent while already executing.
\end{abstract}
\section{Introduction}
Consider the following example: we give to a robot the task of cleaning one by one series of rooms. The robot has a model of the world describing the effects of its actions, and, given the task specification, it synthesizes
a strategy to accomplish its cleaning task. However, in going after its task, the robot would like to be sure to be able to recharge its battery, if it thinks the battery level is getting too low. 
Both cleaning and recharging batteries are (temporally extended) tasks. Once the cleaning task is accepted, the agent must fulfill it, i.e., the cleaning task is a \emph{duty}. Instead, recharging the battery, is what we may call a \emph{right} of the robot, i.e., a task that the agent must be given the ability to fulfil, such that the agent itself can decide to actually fulfill or not. Handling both \emph{duties} and \emph{rights} is the issue studied in this paper.
    
The literature on strategy synthesis \cite{PnueliR89,finkbeiner16}, as well as the literature on planning~\cite{GhallabNauTraverso2016,2019Haslum}, focus only on fulfilling duties, without considering rights. 
Instead, our notion of rights is implicitly related to the notion of \emph{ability} studied in autonomous agents and reasoning about actions, see e.g.,~\cite{LesperanceLLS00}. Indeed, the ability of performing some task requires the existence of strategies for fulfilling the task, but not necessarily the decision to follow such strategy to actually fulfill it. In our case the agent has the \emph{ability} of satisfying also its \emph{rights} while executing the strategy for satisfying the duties, but actually satisfies the rights only if it wants to do so.
Also, talking about duties and rights calls for connections with \emph{obligations} and \emph{permissions} in Deontic Logic~\cite{GabbayForthcoming-GABHOD-2}. However, here we focus mainly on synthesis and leave the exact connection with Deontic Logic for future studies.

Specifically, in this paper, we study how to handle duties and rights in the context of Linear Temporal Logic on finite traces~(\LTLf), see \cite{DegVa13} for a survey.
\LTLf, on the one hand, allows for specifying a rich set of temporally extended specifications \cite{BacchusK00,SilvaML20}, and on the other
hand, focuses on finite traces, which makes it particularly suitable for specifying tasks of intelligent agents. Note that intelligent agents will not get stuck accomplishing a task for all their lifetime, but only for a finite (but unbounded) number of steps.

Technically, our starting point is 
\LTLf synthesis under environment specifications \cite{DegVa15,ADMRicaps19,DDPZ}. We assume the agent is acting in an environment that is specified through safety specifications, which can be thought of as an extension, possibly with non-Markovian features \cite{Gabaldon11}, of nondeterministic fully observable planning domains \cite{Cimatti03,GhallabNauTraverso2016}, as discussed, e.g., in  \cite{CamachoBM18,abs-1807-06777}. Wlog, we are going to use \LTLf, also for these environment safety specifications as in~\cite{Giacomo0TV021}. Over this environment, we give duties and rights to the agent, expressing both of them as arbitrary \LTLf specifications.
The problem that we want to solve is to 
synthesize a suitable strategy to achieve the duties that can be modified while in execution to achieve also the rights, if the agent chooses to do so.

We show that handling duties and rights is 2EXPTIME-complete, as standard \LTLf synthesis \cite{DegVa15}, though it requires a more sophisticated solution concept.
Essentially, we do not only compute the winning strategy as a transducer, but we guarantee that during its execution such a strategy never leaves the \emph{winning region}~(which technically captures the ability to fulfil) of both the duties and the rights. Moreover, by storing such winning region, we can readily build a further transducer representing  a strategy to fulfil also the rights at the moment the agent decides to do so while executing the first strategy.

We then study the case in which further duties and rights are given to the agent while the agent is already executing the strategy for the original duties and rights.
Handling further duties that are given while already executing a strategy is related to \emph{live synthesis}, which has been recently introduced in Formal Methods \cite{livesynthesis}. So as a by-product of our work, we devise a technique for live synthesis in \LTLf. 
We however extend this form of synthesis to handle also rights. We show that, even in this case, synthesis remains 2EXPTIME-complete, and we present techniques to effectively compute such kind of strategies with only a small overhead.

\section{Preliminaries}\label{sec:pre}
\subsection{\LTLftitle Basics}
\textit{Linear Temporal Logic on finite traces}~(\LTLf) is a specification language to express temporal properties on finite traces~\cite{DegVa13}. 
In particular, \LTLf has the same syntax as \LTL, one of the popular specification languages in Formal Methods, which is interpreted over infinite traces~\cite{Pnu77}. Given a set  of propositions $Prop$, the formulas of \LTLf are generated as follows: 
\centerline{$\varphi ::= a \mid (\neg \varphi) \mid (\varphi \wedge \varphi) \mid 
 (\Next \varphi) \mid (\varphi \Until \varphi),
$}

\noindent where $a \in Prop$ is an atom, $\Next$ for \emph{Next}, and $\Until$ for \emph{Until} are temporal operators. 
We make use of standard Boolean abbreviations, such as $\vee$~(or) and $\rightarrow$~(implies), $\true$ and $\false$.
Moreover, we have the following abbreviations for temporal operators, \emph{Eventually} as $\Diamond \varphi \equiv \true \Until \varphi$ and \emph{Always} as $\Box \varphi \equiv \lnot \Diamond \lnot \varphi$. In addition, we have the \emph{Weak Next} operator $\Wnext$ as abbreviation of $\Wnext \varphi \equiv \neg \Next \neg \varphi$.

A \textit{trace} $\pi = \pi_0\pi_1\ldots$ is a sequence of propositional interpretations~(sets), where for every $i \geq 0$, $\pi_i \in 2^{Prop}$ is the $i$-th interpretation of $\pi$. Intuitively, $\pi_i$ is interpreted as the set of propositions that are $true$ at instant $i$. A trace $\pi$ is an \textit{infinite} trace if $\ls(\pi) = \infty$, which is formally denoted as $\pi\in (2^{Prop})^{\omega}$; otherwise $\pi$ is a \textit{finite} trace, denoted as $\pi\in (2^{Prop})^{*}$. 
Moreover, by $\pi^k = \pi_0 \ldots \pi_k$ we denote the \emph{prefix} of $\pi$ up to the $k$-th instant. 
Sometimes we call a prefix of a trace \emph{history}. We denote by $\epsilon$ the empty prefix, i.e., the history of length $0$.
\LTLf formulas are interpreted over finite and nonempty traces. Given $\pi\in (2^{Prop})^{+}$, we define when an \LTLf formula $\varphi$ \emph{holds} at instant $i~(0 \leq i \leq \ls(\pi))$, written as $\pi, i \models \varphi$, inductively on the structure of $\varphi$, as:
\begin{compactitem}
	\item 
	$\pi, i \models a$ iff $a \in \pi_i\nonumber$ (for $a\in{Prop}$);
	\item 
	$\pi, i \models \lnot \varphi$ iff $\pi, i \not\models \varphi\nonumber$;
	\item 
	$\pi, i \models \varphi_1 \wedge \varphi_2$ iff $\pi, i \models \varphi_1 \text{ and } \pi, i \models \varphi_2\nonumber$;
	\item 
	$\pi, i \models \Next\varphi$ iff $i< \ls(\pi)$ and $\pi,i+1 \models \varphi$;
	\item 
	$\pi, i \models \varphi_1 \Until \varphi_2$ iff $\exists j$ such that $i \leq j \leq \ls(\pi)$ and $\pi,j \models\varphi_2$, and $\forall k, i\le k < j$, we have that $\pi, k \models \varphi_1$.
\end{compactitem}
We say $\pi$ \emph{satisfies} $\varphi$, written as $\pi \models \varphi$, if $\pi, 0 \models \varphi$.

\subsection{\LTLftitle for Safety Properties}
\emph{Safety properties} assert that \emph{undesired things never happen}, i.e., a trace always behaves within some allowed boundaries. Thereby, safety properties exclude traces that can be violated by a ``bad" finite prefix. Typically, safety properties are captured as \LTL formulas~\cite{KupfermanVa01}, interpreted over infinite traces. Alternatively, it has been shown in~\cite{Giacomo0TV021} that, one can use \LTLf formulas to capture safety properties over both of finite and infinite traces, by applying an alternative notion of satisfaction that interprets an \LTLf formula over \emph{all} prefixes of a trace. 

\begin{definition}
A (finite or infinite) trace $\pi$ satisfies an \LTLf formula $\varphi$ on \emph{all prefixes}, denoted $\pi \models_\forall \varphi$, if every nonempty finite prefix of $\pi$ satisfies $\varphi$. That is, $\pi^k = \pi_0 \pi_1 \ldots \pi_k \models \varphi$, for every $0 \leq k \leq \ls(\pi)$.
\end{definition}

Moreover, all safety properties expressible in \LTL, i.e., all first-order (logic) safety properties~\cite{LichtensteinPZ85}, can be specified using  \LTLf on all prefixes.
\begin{theorem}{\cite{Giacomo0TV021}}
	Every first-order safety property can be expressed as an \LTLf formula on all prefixes.	
\end{theorem}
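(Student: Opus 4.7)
The plan is to reduce the statement to the standard finite-automaton (or, more precisely, star-free) characterization of safety languages, and then invoke the equivalence between \LTLf and first-order logic on finite words.

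First I would recall the well-known characterization of safety: a language $L\subseteq (2^{\Prop})^{\omega}$ is safety iff there exists a language $G\subseteq (2^{\Prop})^{*}$ of \emph{good prefixes} such that $G$ is nonempty and prefix-closed, and $\pi\in L$ iff every finite prefix of $\pi$ lies in $G$. If $L$ is moreover a first-order safety property (i.e.\ definable in \LTL, which coincides with \FOL over $\omega$-words), one then has to show that $G$ itself is star-free as a language of finite words. This is the step where the main work sits: starting from an \LTL formula $\varphi$ defining $L$, one constructs $G$ as the projection on finite prefixes of an $\omega$-language defined by $\varphi$, and argues that this projection preserves star-freeness — either by a direct translation of the safety \LTL formula into a ``bad-prefix'' star-free expression over finite words, or by appealing to the result that the set of extendable prefixes of a star-free $\omega$-language is star-free when the language is safety.

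Once $G$ is known to be star-free over finite words, I would invoke the classical Kamp/De Giacomo--Vardi theorem that \LTLf captures exactly the star-free (equivalently, first-order definable) languages of finite words. This yields an \LTLf formula $\psi$ such that, for every nonempty finite trace $u$, $u\models\psi$ iff $u\in G$.

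Finally I would close the argument by unfolding the definition of $\models_\forall$: for any (finite or infinite) trace $\pi$,
\[
\pi\models_\forall \psi \;\;\Longleftrightarrow\;\; \forall k\le \ls(\pi):\ \pi^k\models\psi \;\;\Longleftrightarrow\;\; \forall k\le \ls(\pi):\ \pi^k\in G \;\;\Longleftrightarrow\;\; \pi\in L,
\]
where the last equivalence is the defining property of the good-prefix set $G$ for a safety language, and uses prefix-closedness of $G$ to handle finite traces. The hard part is really the middle step: rigorously justifying that ``first-order safety'' transfers from the infinite-word side to star-freeness of $G$ on finite words; the rest is a clean chain of equivalences between $\models_\forall$ and $\models$ for safety properties.
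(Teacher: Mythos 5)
The paper contains no proof of this statement: it is imported verbatim from \cite{Giacomo0TV021}, and the citation of \cite{LichtensteinPZ85} in the sentence immediately preceding it signals the intended argument, namely the classical normal-form route: every first-order safety property over infinite words is equivalent to $\Box H$ with $H$ a pure-past formula; the truth of $H$ at instant $k$ depends only on the prefix $\pi^k$, and pure-past temporal logic on finite traces is expressively equivalent to \LTLf, so one obtains an \LTLf formula $\psi$ with $\pi \models \Box H$ iff $\pi^k \models \psi$ for every $k$, i.e., iff $\pi \models_\forall \psi$. Your route --- the good-prefix characterization of safety, star-freeness of the prefix language, then the Kamp/De Giacomo--Vardi correspondence on finite words --- is a genuinely different decomposition, and it can be made to work; its advantage is that it stays entirely on the automata/algebraic side and needs neither past operators nor a separation theorem. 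As written, however, it is not yet a proof: the step you yourself flag as ``where the main work sits'' (that the good-prefix language $G$ of a first-order safety $\omega$-language is star-free over finite words) is exactly where the content of the theorem lives, and you offer it only as a disjunction of two unproved options. The claim is true --- $G=\{u \mid \exists \pi.\ u\pi\in L\}$ is recognized over finite words by the transition structure of a counter-free automaton for $L$, with the states from which acceptance is still possible taken as final, hence its syntactic monoid is aperiodic --- but this must be argued or precisely cited, whereas the Lichtenstein--Pnueli--Zuck route replaces it with an off-the-shelf normal-form theorem. Two smaller points: $\models_\forall$ quantifies over \emph{nonempty} prefixes only and is defined for finite as well as infinite traces, so you should state what ``first-order safety property'' means on finite traces and make sure your final chain of equivalences starts at $k=0$ rather than at the empty prefix.
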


\subsection{\LTLf Synthesis with Safety Env Specs} 
Reactive synthesis can be viewed as a game between the \textit{environment} and the \textit{agent}, contrasting each other by controlling two disjoint sets of variables $\X$ and $\Y$, respectively. The goal of reactive synthesis is to synthesize an agent strategy such that no matter how the environment behaves, the combined trace from two players satisfy desired properties~\cite{PnueliR89}. In standard synthesis, the agent assumes the environment to be free to choose an arbitrary move at each step, but in AI typically the agent has some knowledge of how the environment works. The environment knowledge that the agent knows apriori is called \emph{environment specification} \cite{DegVa15,ADMRicaps19,DDPZ}.
 
In particular, we focus on the environment specifications that  are formed by safety properties. In this way our environment specifications can be thought as an extension of fully observable nondeterministic domains~\cite{Cimatti03,GhallabNauTraverso2016}, see also \cite{Rintanen04a}. Formally, an environment specification is an \LTLf safety formula $env$, while the agent task is expressed as a standard \LTLf formula $\task$. We describe the synthesis problem as a tuple $\P = (env, \task)$. Note that for simplicity, we do not explicitly list $\X$ and $\Y$ here, since they are given as inputs by default and thus are clear from the context.

An environment strategy is a function $\stenv: (2^{\Y})^* \to 2^\X$, and an agent strategy is a function $\stag: (2^{\X})^+ \to 2^\Y$. A trace $\pi = (X_0 \cup Y_0) (X_1 \cup Y_1) \dots \in (2^{\X \cup \Y})^\omega$, is \emph{compatible} with an environment strategy $\stenv$ if $\stenv(\epsilon) = X_0$ and $\stenv(Y_0Y_1\ldots Y_i) = X_{i+1}$ for every $i$. A trace $\pi$ being compatible with an agent strategy $\stag$ is defined analogously. 
Sometimes, we write $\stag(\pi^k)$ instead of $\stag(X_0 X_1 \cdots X_k)$ for simplicity. We denote the unique infinite sequence that is compatible with $\stenv$ and $\stag$ as $\trace(\stenv, \stag)
$. We also generalize these definitions to finite traces in the obvious way.

Turning to agent strategies, wlog, we require them to be \emph{stopping}, i.e., we require the agent to perform a mandatory stop action. More specifically, every action of the agent is considered as an assignment over $\Y$, and $\stopact$ is one of them. For convenience, wlog, $\stopact$ is encoded as an assignment where all variables in $\Y$ are set to $false$, i.e., $\stopact= \bigwedge_{y \in \Y} \neg y$. 

\begin{definition}{\cite{DDPZ}}
A \emph{stopping} agent strategy is a function $\stag: (2^\X)^+ \rightarrow 2^\Y$, such that for every trace $\pi \in (2^{\X \cup \Y})^{\omega}$ that is compatible with $\stag$, there exists $i \in \SetN$ such that $\stag(\pi^j) = \stopact$ for every $j \geq i$ and $\stag(\pi^{h}) \neq \stopact$ for every $h < i$.
\end{definition}

Having \emph{stopping} agent strategies, we define the \emph{play} induced by given $\stenv$ and $\stag$ as the finite prefix of $\trace(\stenv, \stag)$ that ends right before the first $\stopact$, denoted by $\play(\stenv, \stag)$. Formally, $\play(\stenv, \stag) = (X_0 \cup Y_0) (X_1 \cup Y_1) \dots (X_i \cup Y_i)$ where $Y_{i+1} = \stopact$, and $Y_j \neq \stopact$ for every $0 \leq j \leq i$.

Given an environment safety specification $env$, which is an \LTLf formula, an environment strategy $\stenv$ \emph{enforces} $env$, written $\stenv \ \rhd \ env$, if for every agent strategy $\stag$, it holds that $\trace(\stenv, \stag) \models_\forall env$. We denote the set of environment strategies enforcing $env$ by $\llbracket env \rrbracket$.

\begin{definition}{\cite{DDPZ}}
The problem of synthesis is described as a tuple $\P = (env, \task)$. Realizability of $\P$ checks whether there exists an agent strategy $\stag$ such that $\forall \stenv \in \llbracket env \rrbracket$, $\play(\stenv, \stag) \models \task$. Synthesis of $\P$ computes such a strategy if exists.
\end{definition} 
As usual, we require that $env$ must be \emph{environment realizable}, i.e., $\llbracket env \rrbracket$ is nonempty. 
As shown in~\cite{DDPZ,Giacomo0TV021}, this kind of synthesis can be solved through a reduction to a suitable two-player game constructed from \LTLf formulas $env$ and $\task$, which takes 2EXPTIME. The problem itself is 2EXPTIME-complete.

\subsection{Two-player Games}
A \emph{two-player game} is a game between the \emph{environment} and the \emph{agent}, controlling two disjoint sets of variables $\X$ and $\Y$, respectively. The game is described by a \emph{deterministic automaton} (\DA), which is a tuple $\A = (2^{\X \cup \Y}, Q, I, \delta, \alpha)$, where $2^{\X \cup \Y}$ is the alphabet, $Q$ is a finite set of states, $I \in Q$ is the initial state, $\delta : Q \times 2^{\X \cup \Y} \rightarrow Q$ is the transition function, and $\alpha \subseteq Q^\omega$ is an acceptance condition.
Given an infinite word $\pi = \pi_0 \pi_1 \pi_2 \ldots \in (2^{\X \cup \Y})^\omega$, the \emph{run} $\rho = \run(\A, \pi)$ of $\A$ on $\pi$ is an infinite sequence $\rho = q_0 q_1 q_2 \ldots \in Q^\omega$, where $q_0 = I$ and $q_{i+1} \in \delta(q_i, \pi_i)$ for every $i \geq 0$. The run of $\A$ on a finite prefix $\pi^k$ is defined analogously, and so $\run(\A, \pi^k) = q_0 q_1 q_2 \ldots q_{k+1}$.
A run $\rho$ is \emph{accepting} if $\rho \in \alpha$.
The \emph{language} of $\A$, denoted by $\L(\A)$, is the set of words accepted by $\A$.
In this work, we specifically consider the following acceptance conditions:
\begin{compactitem}
    \item
	\emph{Reachability.} Given a set $R \subseteq Q$,  
	$\reach(R)= \{ q_0 q_1 q_2 \ldots \in Q^\omega  \mid \exists k \geq 0: q_k \in R\}$, i.e., a state in $R$ is visited at least once.
	\item
	\emph{Safety.}  Given a set $S \subseteq Q$, $\safe(S)= \{ q_0 q_1 q_2 \ldots \in Q^\omega \mid \forall k \geq 0: q_k \in S\}$, i.e., only states in $S$ are visited.
	\item
	\emph{Reachability-Safety.}
	Given two sets $R, S \subseteq Q$,
	$\reachsafe(R, S) =  \{ q_0 q_1 q_2 \ldots \in Q^\omega \mid \exists k \geq 0 : q_k \in R \mbox{ and } \forall j, 0 \leq j \leq k : q_j \in S \}$, i.e.,
	a state in $R$ is visited at least once, and until then only states in $S$ are visited.
\end{compactitem}
Notably, a \DA with reachability acceptance condition defines a deterministic finite automaton (DFA).
Depending on the actual acceptance condition $\alpha$, we get reachability, safety, or reachability-safety games.
\begin{theorem}{~\cite{Giacomo0TV021}}\label{thm:reachsafe-as-reach}
    Reachability-safety game can be solved by a linear-time reduction to a reachability game.
\end{theorem}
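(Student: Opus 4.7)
My plan is to reduce a reachability-safety game on a \DA{} $\A = (2^{\X \cup \Y}, Q, I, \delta, \reachsafe(R, S))$ to a reachability game on a slightly enlarged \DA{} $\A'$, by routing every ``bad'' continuation into a trap and every ``good'' continuation into an accepting sink. Concretely, I would let $Q' = Q \cup \{q_g, q_b\}$, keep the same initial state $I$, and define the new transition function $\delta'$ as follows: from $q \in R \cap S$, set $\delta'(q, a) = q_g$ for every $a \in 2^{\X \cup \Y}$; from $q \in S \setminus R$, set $\delta'(q, a) = \delta(q, a)$ if $\delta(q, a) \in S$ and $\delta'(q, a) = q_b$ otherwise; from $q \notin S$, set $\delta'(q, a) = q_b$; and make both $q_g$ and $q_b$ self-looping under every letter. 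The acceptance condition of $\A'$ is $\reach(\{q_g\})$, so $\A'$ is a reachability game over the same $\X/\Y$ split as $\A$.

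Next I would verify that an infinite word $\pi \in (2^{\X \cup \Y})^{\omega}$ is in $\L(\A)$ if and only if it is in $\L(\A')$. For the forward direction, if $\run(\A, \pi) = q_0 q_1 \ldots$ has some $q_k \in R$ with $q_j \in S$ for all $j \leq k$, then $\delta'$ agrees with $\delta$ on the first $k$ transitions and $\delta'(q_k, \pi_k) = q_g$, so $\run(\A', \pi)$ reaches $q_g$ at step $k+1$. For the converse, the only way to enter $q_g$ in $\A'$ is through a transition from a state in $R \cap S$, and the construction forces every preceding state in $Q$ to lie in $S$, since the first step that would leave $S$ is diverted to $q_b$, a dead end disjoint from $q_g$. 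Because $\A'$ preserves the alphabet $2^{\X \cup \Y}$ and hence the partition into environment- and agent-controlled variables, agent strategies in the two games are the same functions, and the equivalence of accepted traces lifts immediately to an equivalence of winning strategies. As for complexity, $|Q'| = |Q| + 2$ and $\delta'$ is obtained from $\delta$ by a single pointwise pass that only inspects $R$- and $S$-membership of source and target, giving a linear-time reduction.

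The hard part will be making this simulation airtight rather than merely intuitively convincing: I have to track the ``one-step lag'' introduced when $\delta(q, a) \in S$ versus $\delta(q, a) \notin S$, so that the step at which $q_g$ is reached corresponds exactly to the step at which the original run enters $R \cap S$, and I have to argue that once $q_b$ is entered the run is permanently trapped and hence can never reach $q_g$. Both points reduce to a short case analysis on the first index at which the original run enters $R \cap S$ or, respectively, first leaves $S$, so I do not expect a conceptual difficulty, only some notational care.
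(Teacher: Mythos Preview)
Your reduction is correct and follows the same core idea the paper uses (described in Section~\ref{sec:3-1}, Step~2, following~\cite{Giacomo0TV021}): trap any run that leaves $S$ and make the reachability target effectively $R\cap S$. The paper's construction is a slight simplification of yours: instead of adding fresh sinks $q_g,q_b$ and intercepting transitions, it keeps the state space $Q$ unchanged, turns every non-$S$ state into a self-loop sink, and sets the reachability target to $R' = R\cap S$; this avoids inspecting $\delta(q,a)$ when building $\delta'$ and removes the one-step lag you flagged, but otherwise the two constructions are interchangeable.
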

Given a game $\A=(2^{\X \cup \Y}, Q, I, \delta, \alpha)$ defined above, an agent strategy $\stag$ is \emph{winning} if $\forall \stenv. \trace(\stenv, \stag) \in \L(\A)$. A state $q \in Q$ is an agent~(resp.~environment) \emph{winning} state if the agent~(resp.~environment) has a winning strategy in $\A'=(2^{\X \cup \Y}, Q, q, \delta, \alpha)$, i.e., same structure but a new initial state $q$. By $\wina$~(resp.~$\wine$) we denote the set of all agent~(resp.~environment) winning states, also called the agent (environment) \emph{winning region}. 
All the games defined above are \emph{determined}, i.e., $q \in Q$ is an agent winning state~($q \in \wina$) iff $q$ is not an environment winning state~($q \notin \wine$)~\cite{Mar75}.  

\section{Synthesis with Duties and Rights}\label{sec:synthesis-rights}
In a common synthesis setting, the agent typically follows a strategy blindly. In other words, any action that the agent performs is expected to serve the task.
In this paper, we would like to assign more freedom to the agent, and thus look into the scenario where the agent has its own rights of doing some work in its own favor.
For example, along the way in cleaning a series of rooms, the robot should remain able to recharge the battery, if it thinks the battery level is getting too low. 
Note that the robot must make sure that the rooms are cleaned when it stops, no matter whether it chooses or not to recharge the battery while cleaning. 

In this synthesis setting, we divide agent tasks into two types: \emph{duties}, expressed as an \LTLf formula $\dutytask$, specifying the mandatory tasks that the agent has to accomplish; \emph{rights}, expressed as an \LTLf formula $\rightstask$, specifying the optional tasks that the agent has the right to decide whether to accomplish. To make sure that the agent can purse $\rightstask$ whenever it chooses to do so, the agent should be equipped with the ability of accomplishing also $\rightstask$ while achieving $\dutytask$.

We start with defining a strategy that enforces a specification $\varphi$ with respect to a history $h$, indicating the moment that the agent chooses to pursue $\varphi$.
\begin{definition}\label{def:strategy-with-h}
    Let $\varphi$ be an \LTLf formula and $h \in (2^{\X \cup \Y})^*$ be a history. An agent strategy $\stag$ \emph{enforces} $\varphi$, with respect to history $h$, denoted by $\stag \ \rhd_h \ \varphi$, if $\forall \stenv \in \llbracket env \rrbracket$ such that $\play(\stenv, \stag)$ has $h$ as a prefix, we have that $\play(\stenv, \stag) \models \varphi$.
\end{definition}
It should be noted that, in Definition~\ref{def:strategy-with-h}, we only consider the cases where both the environment strategy $\stenv$ and the agent strategy $\stag$ are compatible with $h$. That is, $h = (X_0 \cup Y_0) (X_1 \cup Y_1) \dots (X_i \cup Y_i) \in (2^{\X \cup \Y})^*$ is such that for every $0 \leq j \leq i$: $\stag(X_0 X_1 \cdots X_j) = Y_{j}$ and $Y_j \neq \stopact$; $\stenv(\epsilon) = X_0$ and $\stenv(Y_0 Y_1 \cdots Y_j) = X_{j + 1}$. 

Computing such a strategy is analogous to computing a strategy that enforces $\varphi$. We start with computing $\llbracket env \rrbracket$ by taking the following steps:
\begin{compactenum}
    \item
    Build $\A_\env = (2^{\X \cup \Y}, Q_\env, I_\env, \delta_\env, \safe(S))$ that accepts a trace $\pi$ iff $\pi \models_\forall env$. 
    \item
    Solve the safety game on $\A_\env$ for the environment, thus obtaining the \emph{environment winning region} $\wine$.
    \item
    Restrict $\A_\env$ with $\wine$ into $\A'_\env=(2^{\X \cup \Y}, \wine, I_\env, \delta'_\env, $ $ \safe(\wine))$, $\delta'_\env(q, X \cup Y) = \text{undefined}$, if $\exists Y' \in 2^{\Y}. \delta_\env(q,X \cup Y')  \not\in \wine$; $\delta'_\env(q, X \cup Y) = \delta_\env(q,X \cup Y)$ otherwise.
\end{compactenum}
It should be noted that for safety games, there exists a unique ``nondeterministic" strategy that can capture the set of all winning strategies. This strategy can be intuitively interpreted as a ``staying in the winning region" strategy~\cite{BernetJW02}. Therefore, $\A'_\env$ precisely captures $\llbracket env \rrbracket$.

Now we translate \LTLf formula $\varphi$ into DA $\A_\varphi = (2^{\X \cup \Y}, Q_\varphi, I_\varphi, \delta_\varphi, \reach(R_\varphi))$ that accepts a trace $\pi$ iff $\pi^k \models \varphi$ for some $k \geq 0$, and take the product of $\A'_\env$ and $\A_\varphi$ into $\A = (2^{\X \cup \Y}, Q, I, \delta, \reach(R))$, where $Q = \wine \times Q_\varphi$, $I = (I_\env, I_\varphi)$, $\delta((q_1,q_2), X\cup Y) = (\delta_\env(q_1,X \cup Y), \delta_\varphi(q_2,X \cup Y))$, and $R = R_\varphi$~(for simplicity, we omit the projection of states in $R$ to $R_\varphi$ here, we do the same later for similar usage). Indeed, $\delta((q_1,q_2), X\cup Y)=\text{undefined}$, if $\delta_\env(q_1,X \cup Y) = \text{undefined}$. At the end, solve a reachability game on $\A$ for the agent via a least fixpoint computation and obtain the \emph{agent winning region} 
$\wina_\varphi = \bigcup_{0 \leq l \leq u} \wina_\varphi^l$, where $\wina_\varphi^l$ are the ``approximates" of the fixpoint computation.
Clearly, if $\A$ does not have an agent winning strategy, i.e., $I \not\in \wina_\varphi$, or $\run(\A, h)$ does not always visit states in $\wina_\varphi$, then there does not exist an agent strategy enforcing $\varphi$ with respect to $h$. Otherwise, we abstract $\stag$ enforcing $\varphi$ with respect to $h$, by first restricting $\stag$ to be compatible with $h$, considering only the environment strategies that are compatible with $h$, then following the least fixpoint computation to get closer to $R_\varphi$ at every step until reaching $R_\varphi$. At the end, $\stag$ keeps playing $\stopact$ right after reaching $R_\varphi$.
The correctness of the construction is justified by the following lemma, which is easy to prove by construction.
\begin{lemma}\label{lem:strategy-enforce-with-h}
    Let $\varphi$ be an \LTLf formula, $h \in (2^{\X \cup \Y})^*$ be a finite history, and $\stag$ be constructed as above. Then $\stag$ enforces $\varphi$ with respect to $h$.
\end{lemma}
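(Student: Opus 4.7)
The plan is to decompose the argument along the three-layer construction $\A'_\env \to \A_\varphi \to \A$ and then glue the pieces through the history-compatibility restriction. First, I would establish that $\A'_\env$ faithfully represents $\llbracket env \rrbracket$: a play remains forever in $\wine$ under $\delta'_\env$ if and only if the generating environment strategy enforces $env$. This is the determinacy of safety games combined with the standard ``stay in the winning region'' characterisation of maximal nondeterministic winning strategies, already invoked in the construction. Second, I would verify that $\A_\varphi$ captures prefix-satisfaction of $\varphi$ in the intended way: a trace $\pi$ has $\pi^k \models \varphi$ iff $\run(\A_\varphi, \pi)$ visits $R_\varphi$ at step $k+1$. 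This is the standard \LTLf-to-\DFA correspondence. Consequently, the product $\A$ is a reachability game layered on top of the safe region $\wine$, and the least fixpoint $\wina_\varphi = \bigcup_{l \leq u} \wina_\varphi^l$ correctly identifies the states from which the agent can force a visit to $R_\varphi$ without letting the environment leave $\wine$.

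Next, I would analyse the extracted $\stag$. The key observations are: \myi by design $\stag$ is compatible with $h$, so any $\play(\stenv, \stag)$ extending $h$ is indexed only by $\stenv$'s that are themselves compatible with $h$ and, as the lemma requires, lie in $\llbracket env \rrbracket$; \myii at every product state $q \in \wina_\varphi^l$ with $l \geq 1$ visited after $h$, $\stag$ plays a $Y$ such that for every $X$ the successor lies in $\wina_\varphi^{l-1}$, which exists precisely because $q \in \wina_\varphi^l$; \myiii iterating this rank-decreasing step at most $u$ times past the endpoint of $h$ forces the run into $R_\varphi$, at which point $\stag$ switches to $\stopact$. Combining these yields $\play(\stenv, \stag) \models \varphi$, i.e., $\stag \ \rhd_h \ \varphi$.

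The main obstacle, and really the only delicate bookkeeping, is aligning the compatibility requirement of Definition~\ref{def:strategy-with-h} with the fixpoint-based extraction. The construction is meaningful only under the implicit precondition that $\run(\A, h)$ stays in $\wina_\varphi$; if it does not, the preceding paragraph has already noted that no history-compatible strategy can enforce $\varphi$, so the lemma has no content in that case. Once this precondition is isolated, the remaining argument is a direct induction on the rank $l$, standard for reachability games, which combines cleanly with the safety invariant inherited from $\A'_\env$ to yield the conclusion.
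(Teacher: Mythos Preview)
Your proposal is correct and follows essentially the same route as the paper: isolate the precondition that $\run(\A,h)$ stays inside $\wina_\varphi$, then use the rank-decreasing fixpoint argument to drive the play into $R_\varphi$ after $h$ and switch to $\stopact$; the paper's own proof is considerably terser (it essentially says ``by construction'', writes down the two-phase definition of $\stag$, and concludes). One minor slip worth fixing in \myii: the quantifier order is reversed for this turn structure---since $\stag:(2^\X)^+\to 2^\Y$ reacts to the environment, the correct statement is that for every $X$ the agent picks some $Y$ (depending on $X$) sending the successor into $\wina_\varphi^{l-1}$, not a single $Y$ working uniformly for all $X$.
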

\begin{proof}
By construction, we compute the set of agent winning states $\wina_\varphi = \bigcup_{0 \leq l \leq u} \wina_\varphi^l$ through the reachability game on $\A_\varphi$. Moreover, if $\run(\A_\varphi,h)$ does not always visit states in $\wina_\varphi$, there does not exist an agent strategy enforcing $\varphi$ with respect to $h$. Otherwise, $\run(\A_\varphi,h)$ leads to a state $q \in \wina_\varphi$, from where there exists an agent strategy $\hat{\stag}$ that guides the play to $R$. At the end, $\hat{\stag}$ keeps playing $\stopact$ right after visiting $R$.

$\stag$ is computed by first following $h$ to $q$, and then following $\hat{\stag}$ to get closer to $R$ at every step until reaching $R$. Formally, for every $\xi^k \in (2^\X)^+$ that is compatible with $h = (X_0 \cup Y_0) (X_1 \cup Y_1) \dots (X_i \cup Y_i) \in (2^{\X \cup \Y})^*$

$\stag(\xi^k) = 
\begin{cases}
    Y_k & \text{ if } 0 \leq k \leq i, \\
    \hat{\stag}(\iota) & \text{ if } \xi^k = h \cdot \iota.
\end{cases}
$

Indeed, when the play reaches $R$, $\varphi$ is satisfied. Hence, it holds that $\forall \stenv \in \llbracket env \rrbracket$ such that $\play(\stenv, \stag)$ has $h$ as a prefix, we have that $\play(\stenv, \stag) \models \varphi$.
\end{proof}

The following theorem shows that computing a strategy that enforces $\varphi$ with respect to a history $h$ is not more difficult than computing a strategy that just enforces $\varphi$.
\begin{theorem}\label{thm:enforce-with-h}
    Let $\varphi$ be an \LTLf formula and $h \in (2^{\X \cup \Y})^*$ be a history. Computing an agent strategy that enforces $\varphi$ with respect to history $h$ is 2EXPTIME-complete in $\varphi$.
\end{theorem}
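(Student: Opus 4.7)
The plan is to establish the 2EXPTIME upper bound via the construction presented just before the theorem statement, and obtain the matching lower bound by reduction from standard \LTLf synthesis under environment specifications.

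For the upper bound, I would simply analyze the complexity of the procedure whose correctness is already guaranteed by Lemma~\ref{lem:strategy-enforce-with-h}. The pipeline has four stages: \emph{(i)}~translate the environment safety specification $env$ into the \DA $\A_\env$ with safety acceptance, and translate $\varphi$ into the \DA $\A_\varphi$ with reachability acceptance; \emph{(ii)}~solve the safety game on $\A_\env$ to obtain $\wine$ and restrict to $\A'_\env$; \emph{(iii)}~take the product $\A=\A'_\env\times\A_\varphi$ and solve the reachability game on it, recording the approximants $\wina_\varphi^l$ of the least fixpoint, thereby obtaining $\wina_\varphi$; \emph{(iv)}~simulate $h$ inside $\A$, check that $\run(\A,h)$ stays in $\wina_\varphi$, and from the endpoint read off a strategy that descends through the approximants until $R$ is reached and then plays $\stopact$. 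Stage \emph{(i)} is the dominant cost: converting an \LTLf formula to a \DFA/\DA is doubly exponential in the size of the formula~\cite{DegVa13}, whereas $env$ contributes at most a comparable blow-up. Stages \emph{(ii)}--\emph{(iv)} are polynomial in the size of the resulting automata, since safety and reachability games are solvable in linear time in the arena. Plugging $h$ in is linear in $|h|$ and does not affect the asymptotic bound. Hence the whole procedure runs in 2EXPTIME in $|\varphi|$ (and $|env|$).

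For the lower bound, I would reduce from plain \LTLf synthesis under environment specifications, which is known to be 2EXPTIME-complete~\cite{DegVa15,DDPZ,Giacomo0TV021}. Given any instance $\P=(env,\task)$ of such synthesis, take the history $h=\epsilon$ (the empty prefix). Then Definition~\ref{def:strategy-with-h} degenerates to the standard notion: an agent strategy $\stag$ enforces $\task$ with respect to $\epsilon$ iff for every $\stenv\in\llbracket env\rrbracket$ the induced play satisfies $\task$, which is exactly realizability of $\P$. So any algorithm for the problem in the statement, when run on $(env,\task,\epsilon)$, decides standard \LTLf synthesis; this yields 2EXPTIME-hardness in $|\varphi|$.

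I do not anticipate any serious obstacle: correctness of the construction is already discharged by Lemma~\ref{lem:strategy-enforce-with-h}, and both directions of the complexity bound are routine given the cited results. The only mildly delicate point is making explicit that prefixing the computed strategy with the moves dictated by $h$ adds only linear overhead and does not interact badly with the fixpoint-based extraction, so I would briefly argue that since $\run(\A,h)$ is required to remain in $\wina_\varphi$, the continuation from its endpoint is a well-defined winning strategy in the residual reachability game and can be read off the stored approximants in polynomial time.
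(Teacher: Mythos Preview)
Your proposal is correct and follows essentially the same approach as the paper: membership via the doubly-exponential automata construction plus polynomial-time game solving on the resulting arena, and hardness by specializing to $h=\epsilon$ so that the problem collapses to standard \LTLf synthesis. Your write-up is somewhat more detailed (the staged breakdown and the remark about the linear overhead of simulating $h$), but the underlying argument is identical.
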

\begin{proof} 
    We prove from the following two aspects.
    \item{Membership.}
    Constructing the automata from \LTLf formulas $\varphi$ and $env$ contributes to the main computational complexity, which takes 2EXPTIME. 
    The final reachability game can be constructed in polynomial time in the size of the DAs $\A_{\env}$, $\A_\varphi$. Solving the final reachability game can be done in linear time in the size of the game arena.
    \item{Hardness.}
    Note that as a special case of this problem, we have standard \LTLf synthesis, by considering $h$ as an empty trace. And \LTLf synthesis itself is 2EXPTIME-complete~\cite{DegVa15}.
\end{proof}

For the synthesis setting that allows agent rights $\rightstask$ while pursuing $\dutytask$, we expect an agent strategy being able to enforce $\dutytask$, and along the execution until then, the agent is always able to enforce also $\rightstask$, i.e., to enforce $\dutytask \wedge \rightstask$.
\begin{definition}
Agent strategy $\stag$ enforcing $\dutytask$ is right-aware for $\rightstask$ if $\forall \stenv \in \llbracket env \rrbracket$:
\begin{itemize}
    \item 
    $\play(\stenv, \stag) \models \dutytask$;
    \item
    for every prefix $h$ of $\play(\stenv, \stag)$, there exists an agent strategy $\stag_{h}$ that enforces $\dutytask \wedge \rightstask$ with respect to history $h$, i.e., $\stag_{h} \ \rhd_h \ \dutytask \wedge \rightstask$.
\end{itemize}
\end{definition}

The problem of \LTLf synthesis with duties and rights is defined as follows.
\begin{definition}[\LTLf synthesis with duties and rights]\label{def:synthesis}
The problem is described as a tuple $\P= (env, \dutytask, \rightstask)$, where $env$ is an \LTLf formula specifying the environment safety specification, $\dutytask$ and $\rightstask$ are \LTLf formulas specifying the duties and rights, respectively. Realizability of $\P$ checks whether there exists an agent strategy $\stag$ enforcing $\dutytask$ that is right-aware for $\rightstask$. Synthesis of $\P$ computes a strategy $\stag$ if exists.
\end{definition}  

This class of synthesis problem is able to naturally reflect the problem structure of many autonomous agent applications. We illustrate this with a relatively simple example.
\begin{example}\label{example:1}
Consider a cleaning robot working in a circular hallway, where the charging station is located close to the entrance. Suppose the robot gets assigned a duty of ``cleaning room A" $\dutytask = \Diamond (\neg \textit{Dust\_A} \wedge \textit{RobotOut\_A})$, together with the rights of ``fully charging battery" $\rightstask = \Diamond (\textit{BatteryFull})$. In this hallway, the robot has two strategies to enforce $\dutytask$: 
\begin{compactenum}
    \item
    Take the direction that passes the charging station to room A and clean it. The remaining battery after enforcing $\dutytask$ still allows the robot to reach the charging station;
    \item
    Take the other direction to reach room A and clean it. The remaining battery after enforcing $\dutytask$ is not enough for the robot to reach the charging station.
\end{compactenum}
Although both strategies allow the robot to enforce $\dutytask$, only strategy (1) allows the robot to enforce $\dutytask$ and be right-aware of $\rightstask$ ``fully charging battery". 
\end{example}

\subsection{Synthesis Technique}\label{sec:3-1}
Following the construction explained above, we can compute $\llbracket env \rrbracket$ and represent it as $\A'_\env = (2^{\X \cup \Y}, \wine, I_\env, \delta'_\env, \safe(\wine))$. Moreover, we know that both duties $\dutytask$ and rights $\rightstask$ can be represented by DAs $\A_{\duties}$ and $\A_{\rights}$ with reachability conditions, respectively. The crucial difference is that, apart from achieving $\dutytask$ through a reachability game on $\A_{\duties}$, agent rights allow the agent to decide whether to achieve $\rightstask$. To do so, the agent should have the ability to make such decision, which can be naturally captured by the agent winning region of the reachability game on $\A_{\rights}$.

Given the synthesis problem $\P= (env, \dutytask, \rightstask)$, we have the following: regardless of which strategy the environment chooses to enforce $env$, thus  staying in $\A'_\env$, the desired agent strategy must make sure that the generated trace satisfies the reachability condition of $\A_{\duties}$, and that  if the agent decides to pursue also $\rightstask$, there exists a strategy that the agent can take to satisfy the reachability conditions of $\A_{\duties}$ and $\A_{\rights}$.

To synthesize such a strategy, we do the following: 
\myi compute the agent winning region $\wina_{\rights}$, from where the agent is able to lead the trace 
to satisfy the reachability conditions of $\A_{\duties}$, also $\A_{\rights}$; 
\myii compute an agent winning strategy $\stag$ s.t.
for every $\stenv \in \llbracket env \rrbracket$, $\trace(\stenv, \stag)$ satisfies the reachability condition of $\A_{\duties}$ by visiting \emph{only} states in $\wina_{\rights}$. In this way, the agent maintains the ability of also satisfying the reachability conditions of $\A_{\rights}$.
We now elaborate on every step.

\smallskip
\smallskip
\noindent\textbf{Step 1.} 
Compute $\wina_{\rights}$. 
Build $\A_{\duties} = (2^{\X \cup \Y}, Q_{\duties}, I_{\duties},$ $ \delta_{\duties}, \reach(R_{\duties}))$ that accepts a trace $\pi$ iff $\pi^k \models \varphi_{\duties}$ for some $k \geq 0$, and $\A_{\rights} = (2^{\X \cup \Y}, Q_{\rights}, I_{\rights}, \delta_{\rights}, \reach(R_{\rights}))$ that accepts a trace $\pi$ iff $\pi^k \models \varphi_{\rights}$ for some $k \geq 0$.
Take the product of $\A'_\env$, $\A_{\duties}$, and $\A_{\rights}$ into $\A = (2^{\X \cup \Y}, Q, I,$ $ \delta, \reach(R))$, where $Q = \wine \times Q_{\duties} \times Q_{\rights}$, $I = (I_\env, I_{\rights}, I_{\duties})$, $\delta((q_1,q_2,q_3), X\cup Y) = (\delta_\env(q_1,X \cup Y), \delta_{\duties}(q_2,X \cup Y), \delta_{\rights}(q_3,X \cup Y))$, and $R = R_{\duties} \cap R_{\rights}$. Indeed, $\delta((q_1,q_2,q_3), X\cup Y) = \text{undefined}$, if $\delta_\env(q_1,X \cup Y) = \text{undefined}$. 
At the end, solve a reachability game on $\A$ for the agent via a least fixpoint computation, thus obtaining $\wina_{\rights} = \bigcup_{0 \leq i \leq m} \wina_{\rights}^i$. If $I \not\in \wina_{\rights}$, return ``unrealizable".

\begin{lemma}\label{lem:realizability}
Let $\P$ be a problem of \LTLf synthesis with duties and rights, and $\wina_{\rights}$ the agent winning region of the reachability game on $\A = (2^{\X \cup \Y}, Q, I, \delta, \reach(R_{\duties} \cap R_{\rights}))$ computed as above. Then $\P$ is realizable iff $I \in \wina_{\rights}$.
\end{lemma}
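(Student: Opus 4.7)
The plan is to prove the biconditional by matching right-aware strategies with winning strategies of the product reachability game $\A$, exploiting the product construction to shuttle between the two settings.

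For the ``if'' direction, I would let $\stag^\star$ be any agent winning strategy witnessing $I \in \wina_{\rights}$, extracted from the fixpoint computation and made stopping by playing $\stopact$ as soon as $R_{\duties} \cap R_{\rights}$ is first visited. Against every $\stenv \in \llbracket env \rrbracket$, the resulting play $\play(\stenv, \stag^\star)$ therefore terminates at a state whose $\A_{\duties}$- and $\A_{\rights}$-components lie in $R_{\duties}$ and $R_{\rights}$ respectively, giving $\play(\stenv, \stag^\star) \models \dutytask \wedge \rightstask$ and hence $\play(\stenv, \stag^\star) \models \dutytask$. To obtain right-awareness, for each prefix $h$ of such a play I would simply take $\stag_h := \stag^\star$: since $\stag^\star$ wins against \emph{every} environment strategy in $\llbracket env \rrbracket$, it wins in particular against those producing plays with prefix $h$, so $\stag^\star \rhd_h \dutytask \wedge \rightstask$.

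For the ``only if'' direction, I would instantiate the right-awareness clause at the empty history. Assuming $\P$ is realizable with witness $\stag$, and observing that $\epsilon$ is trivially a prefix of every play, the definition yields some $\stag_\epsilon$ with $\stag_\epsilon \rhd_\epsilon \dutytask \wedge \rightstask$, i.e., $\play(\stenv, \stag_\epsilon) \models \dutytask \wedge \rightstask$ for every $\stenv \in \llbracket env \rrbracket$. Reading this through the product construction of $\A$ from $\A'_\env$, $\A_{\duties}$, and $\A_{\rights}$, it says precisely that $\stag_\epsilon$ is an agent winning strategy for the reachability game on $\A$ starting at $I$, and therefore $I \in \wina_{\rights}$.

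The only subtlety will be aligning the reachability acceptance on $\A$ (``$R_{\duties} \cap R_{\rights}$ is visited at some position'') with the semantics of $\play(\stenv, \stag) \models \dutytask \wedge \rightstask$ (satisfaction at the play's stopping length). This is not a genuine obstacle, however: because the stopping convention forces $\stag^\star$ to play $\stopact$ exactly when the target set is first reached, the finite play produced coincides with the witnessing prefix, so the two notions of satisfaction match without any further argument.
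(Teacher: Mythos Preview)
Your proposal is correct and follows essentially the same approach as the paper: for the ``if'' direction you take a winning strategy for the reachability game, make it stop at $R_{\duties}\cap R_{\rights}$, and reuse it as $\stag_h$ for every prefix $h$; for the ``only if'' direction you instantiate right-awareness at the empty history to extract a strategy enforcing $\dutytask\wedge\rightstask$, which is exactly the paper's move (phrased there contrapositively rather than directly). Your explicit remark about aligning the reachability acceptance with the stopping semantics is a nice clarification that the paper leaves implicit.
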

\begin{proof}
We prove the lemma in both directions.
\item{($\Leftarrow$)} We need to show that if $I \in \wina_{\rights}$, then $\P$ is realizable.
By construction, $I \in \wina_{\rights}$ shows that there exists an agent strategy $\stag$ such that, for every $\stenv \in \llbracket env \rrbracket$, $\pi = \trace(\stenv, \stag)$ is such that $\pi \in \L(\A)$. 
That is to say, $\pi^k \models \dutytask \wedge \rightstask$ for some $k \geq 0$, thus it also holds that $\pi^k \models \dutytask$. In this case, $\stag$ starts playing $\stopact$ after $\pi^k$, and so we have $\play(\stenv, \stag) = \pi^k$. Moreover, for every prefix $h$ of $\play(\stenv, \stag)$, we can construct an agent strategy $\stag_h$ that works exactly the same as $\stag$, which indeed enforces $\dutytask \wedge \rightstask$ with respect to history $h$. 

\item{($\Rightarrow$)} We prove by contradiction. If $I \not\in \wina_{\rights}$, then there does not exist an agent winning strategy of the reachability game on $\A$. Suppose the agent decides to pursue also $\rightstask$ at the very beginning, then the agent does not have a strategy that enforces $\dutytask \wedge \rightstask$ with respect to history $h = \epsilon$, i.e., empty trace. Hence, $\P$ is unrealizable.
\end{proof}

\noindent\textbf{Step 2.}  Compute strategy $\stag$. Note that $\stag$ needs to lead the play to reach $R_{\duties}$ by visiting states in $\wina_{\rights}$ only. First, we define a new \DA with reachability-safety condition $\A_1 = (2^{\X \cup \Y}, Q, I, \delta, \reachsafe(R_{\duties}, \wina_{\rights}))$ from $\A = (2^{\X \cup \Y}, Q, I, \delta, \reach(R_{\duties} \cap R_{\rights}))$. It has been shown in~\cite{Giacomo0TV021} that $\A_1$ can be reduced to a new DA $\A_1' = (2^{\X \cup \Y}, Q, I, \delta', \reach(R'))$ with $\delta'$ and $R'$ as follows:
\begin{compactitem}
\item $\delta'(q, X \cup Y) = \begin{cases}
    \delta(q, X \cup Y) &\text{ if $q \in \wina_{\rights}$} \\
    q &\text{ if $q \not \in \wina_{\rights}$}
\end{cases}$
\item $R' = R_{\duties} \cap \wina_{\rights}$
\end{compactitem}
\noindent
Intuitively, the only change in $\delta'$ is to turn all non-safe states~(states not in $\wina_{\rights}$) into sink states, while $R'$ requires reaching a goal state~(a state in $R_{\duties}$) that is also safe~(i.e., it is in $\wina_{\rights}$).
Then we solve a reachability game on $\A_1'$ via a least fixpoint computation and obtain $\wina = \bigcup_{0 \leq j \leq n} \wina^j$. Note that $I \in \wina$ indeed holds, which is guaranteed by the reachability game for computing $\wina_{\rights}$ in the previous step. Finally, we define a strategy generator based on $\wina = \bigcup_{0 \leq j \leq n} \wina^j$, represented as a transducer 
$\T = (2^{\X \cup \Y}, Q, I, \varrho, \tau)$, where
\begin{itemize}
    \item
    $2^{\X \cup \Y}$, $Q$ and $I$ are the same as in $\A$;
    \item
    $\varrho: Q \times 2^{\X} \rightarrow 2^Q$ is the transition function such that 
    $\varrho (q, X) = \{q'\mid q' = \delta (q, X \cup Y) \text{ and } Y \in \tau(q)\};$
	\item 
	$\tau: Q \times 2^\X \rightarrow 2^{2^{\Y}}$ is the output function s.t. $\forall X \in 2^\X$, 
	$\tau(q, X) = \{ Y \mid \delta(q, X \cup Y) \in \wina^j \}$ if $q \in (\wina^{j+1} \backslash \wina^j)$, otherwise $\tau(q, X) = 2^\Y$.
\end{itemize}

This transducer generates an agent strategy $\stag: (2^\X)^+ \to 2^{\Y}$ in the following way: for every $\xi^k \in (2^{\X})^+~(k \geq 0)$

$
\stag(\xi^k) = 
\begin{cases}
    \stopact & \text{ if } \run(\A, \pi^{k-1}) \text{ visited } R_{\duties}, \\
    Y \in \tau(q_k, X_k) & \text{ otherwise. }
\end{cases}
$
where $\run(\A, \pi^{k-1}) = q_0q_1q_2 \ldots q_k$ s.t. $q_0 = I$, and $\pi^{k-1} = (X_0 \cup Y_0) (X_1 \cup Y_1) \ldots (X_{k-1} \cup Y_{k-1})$. Note that $\T$ generates a strategy in the way of restricting $\tau$ to return only one of its values (chosen arbitrarily).

\begin{lemma}\label{lem:synthesis}
    Let $\P$ be a problem of \LTLf synthesis with duties and rights, and $\T$ constructed as above. Any strategy returned by $\T$ is a strategy that solves the synthesis of $\P$.
\end{lemma}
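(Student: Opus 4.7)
The plan is to verify the two clauses of right-awareness separately, leveraging the layered construction of $\T$: the outer reachability-safety game on $\A_1'$ enforces $\dutytask$ while the play is kept inside the safe set $\wina_{\rights}$, and $\wina_{\rights}$ itself was computed in Step~1 as the agent winning region of the reachability game with target $R_{\duties} \cap R_{\rights}$ on $\A$, so it witnesses, at every visited state, the existence of a strategy for the combined task $\dutytask \wedge \rightstask$.

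For clause~(i), I would first observe that $\T$ is the standard fixpoint-based strategy generator associated with the reachability game on $\A_1'$: the decreasing index of $\wina^{j}$ enforced by $\tau$ guarantees progress, and by Theorem~\ref{thm:reachsafe-as-reach} this is equivalent to a winning strategy in the reachability-safety game with target $R_{\duties}$ and safe region $\wina_{\rights}$. Since Step~1 ensures $I \in \wina_{\rights}$ and hence $I \in \wina$, any $\stag$ extracted from $\tau$ drives the run, against any $\stenv \in \llbracket env \rrbracket$ (compatibility with $\llbracket env \rrbracket$ is enforced by the $\A'_\env$ component of the product), to the first state $q_k \in R_{\duties}$ without ever leaving $\wina_{\rights}$. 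At that point $\stag$ issues $\stopact$, hence $\play(\stenv, \stag) = \pi^{k-1}$ and its run on $\A_{\duties}$ visits $R_{\duties}$, which by construction of $\A_{\duties}$ yields $\play(\stenv, \stag) \models \dutytask$.

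For clause~(ii), I would fix any $\stenv \in \llbracket env \rrbracket$ and any prefix $h$ of $\play(\stenv, \stag)$. Because $\stag$ maintains the invariant ``stay inside $\wina_{\rights}$'', the state $q_h$ reached by $\run(\A, h)$ lies in $\wina_{\rights}$. By the very definition of $\wina_{\rights}$, there exists an agent strategy $\hat{\stag}$ that, starting from $q_h$ and against every environment strategy that extends $h$ while still enforcing $env$, drives the play to $R_{\duties} \cap R_{\rights}$. Splicing $h$ with $\hat{\stag}$ exactly as in the construction behind Lemma~\ref{lem:strategy-enforce-with-h} yields a strategy $\stag_h$ with $\stag_h \ \rhd_h \ \dutytask \wedge \rightstask$, which is precisely clause~(ii).

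The main obstacle I anticipate is the bookkeeping between the two product automata $\A$ and $\A_1'$: $\A_1'$ redirects transitions leaving $\wina_{\rights}$ into sink self-loops, while $\A$ does not. I must therefore argue that a run of $\A_1'$ compatible with $\stag$ and with any $\stenv \in \llbracket env \rrbracket$ never uses those redirected edges, so that its projection onto $\A$ is a genuine $\A$-run and the witness strategy from $\wina_{\rights}$ applies at every intermediate $q_h$; this follows from the safety invariant enforced by $\tau$ through the choice $R' = R_{\duties} \cap \wina_{\rights}$, together with the fact that $\A'_\env$ already restricts environment moves to those enforcing $env$, so the witness strategy from the $\A$-game is sound against every $\stenv \in \llbracket env \rrbracket$, not only the currently followed one.
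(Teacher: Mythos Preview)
Your proposal is correct and follows essentially the same approach as the paper's proof: both argue that the fixpoint-based $\tau$ forces progress toward $R_{\duties}$ while confining the run to $\wina_{\rights}$, and then use membership in $\wina_{\rights}$ at every prefix $h$ to splice in a witness strategy $\hat{\stag}$ for $\dutytask \wedge \rightstask$ from $\delta(I,h)$. Your treatment is in fact slightly more careful than the paper's, since you explicitly address the $\A$ versus $\A_1'$ bookkeeping (the redirected sink transitions) and invoke Theorem~\ref{thm:reachsafe-as-reach} and Lemma~\ref{lem:strategy-enforce-with-h} by name, whereas the paper leaves these points implicit.
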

\begin{proof}
Let $\stag$ be an arbitrary strategy generated by $\T$, i.e., $I \in \wina$, and $\stenv \in \llbracket env \rrbracket$ be an arbitrary environment strategy that enforces $env$. First, $\T$ already restricts the environment to be able to only choose strategies from $\llbracket env \rrbracket$. Then, by construction, $\play(\stenv, \stag)$ satisfies the following:
\begin{itemize}
    \item 
    $\play(\stenv, \stag) \models \dutytask$, since $\stag$ forces $\play(\stenv, \stag)$ to get closer to $R_{\duties}$ at every step until reaching $R_{\duties}$. Moreover, $\stag$ starts playing $\stopact$ only after then.
    \item
    there exists $\stag_{h} \ \rhd_h \ \dutytask \wedge \rightstask$ for every prefix $h$ of $\play(\stenv, \stag)$. This holds since $\stag$ restricts $\play(\stenv, \stag)$ to visit states in $\wina_{\rights}$ only. Therefore, for every prefix $h$ of $\play(\stenv, \stag)$,
    there exists an agent strategy $\hat{\stag}$ of the reachability game on $\A_h = (2^{\X \cup \Y}, Q, \delta(I, h), \delta, \reach(R_{\duties} \cap R_{\rights}))$. 
    Hence, we can construct an agent strategy $\stag_h$ that first copies $h$ until reaching $\delta(I, h)$ and works as $\hat{\stag}$ until reaching $R_{\duties} \cap R_{\rights}$, then plays $\stopact$ forever. Therefore, $\stag_h$ holds that $\stag_h \ \rhd_h \dutytask \wedge \rightstask$. \qedhere
\end{itemize}
\end{proof}

Notice that by the construction described above, if the reachability game on $\A$~(in Step 1) does not have an agent winning strategy, then $\T$ trivially returns \textit{no strategy} and indeed, by Lemma~\ref{lem:realizability}, $\P$ is unrealizable. As an immediate consequence of Lemmas~\ref{lem:realizability}\&\ref{lem:synthesis}, we have:
\begin{theorem}\label{thm:correctness}
    Let $\P$ be a problem of \LTLf synthesis with duties and rights. Realizability of $\P$ can be solved by reducing to a suitable reachability game. Synthesis of $\P$ can be solved by generating a strategy from $\T$ constructed as above. 
\end{theorem}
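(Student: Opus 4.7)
The plan is to derive the theorem as an immediate corollary of Lemmas~\ref{lem:realizability} and~\ref{lem:synthesis}, by matching each half of the statement to the corresponding lemma and verifying that Steps~1 and~2 of the construction indeed realize the claimed reductions.

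For the realizability half, I would observe that Step~1 constructs the product \DA $\A=(2^{\X\cup\Y},Q,I,\delta,\reach(R_{\duties}\cap R_{\rights}))$ and solves a reachability game on it to obtain $\wina_{\rights}$. This is by definition a reduction of the realizability question to a reachability game. Invoking Lemma~\ref{lem:realizability}, $\P$ is realizable iff $I\in\wina_{\rights}$, so the procedure is both sound and complete as a realizability test.

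For the synthesis half, assuming $I\in\wina_{\rights}$, Step~2 builds the auxiliary reachability-safety automaton $\A_1$ with condition $\reachsafe(R_{\duties},\wina_{\rights})$ and, by Theorem~\ref{thm:reachsafe-as-reach}, further reduces it to the reachability game on $\A_1'$ whose level sets $\wina^j$ drive the transducer $\T$. Lemma~\ref{lem:synthesis} then guarantees that any strategy generated by $\T$ enforces $\dutytask$ and is right-aware for $\rightstask$, which is exactly what Definition~\ref{def:synthesis} requires of a solution to the synthesis problem. Hence $\T$ solves synthesis.

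The hard part has already been absorbed into Lemma~\ref{lem:synthesis}: one must simultaneously secure that $\stag$ drives the play into $R_{\duties}$ before playing $\stopact$, \emph{and} that at every intermediate history $h$ there remains a witness strategy $\stag_h\rhd_h \dutytask\wedge\rightstask$. The safety component $\safe(\wina_{\rights})$ baked into $\A_1$ is precisely the device that maintains this ability throughout execution, via Lemma~\ref{lem:strategy-enforce-with-h} applied at each prefix. Once this invariant is in place, the assembly into the theorem is purely bookkeeping, so no further argument is needed beyond citing the two lemmas and Theorem~\ref{thm:reachsafe-as-reach}.
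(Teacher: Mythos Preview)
Your proposal is correct and matches the paper's own proof, which simply states that the theorem is an immediate consequence of Lemmas~\ref{lem:realizability} and~\ref{lem:synthesis}. Your additional remarks about Theorem~\ref{thm:reachsafe-as-reach} and the role of the safety component are accurate elaborations of the construction but are not needed beyond citing the two lemmas.
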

\begin{proof}
Immediate consequence of Lemmas~\ref{lem:realizability}\&\ref{lem:synthesis}.
\end{proof} 

\begin{theorem}\label{thm:realizability-complexity}
    Let $\P$ be a problem of \LTLf synthesis with duties and rights. Realizability of $\P$ is 2EXPTIME-complete.
\end{theorem}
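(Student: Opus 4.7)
The plan is to prove this in two parts: membership in 2EXPTIME by analyzing the synthesis construction from Section~\ref{sec:3-1}, and 2EXPTIME-hardness by a trivial reduction from standard \LTLf synthesis.

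For membership, I will track the cost of each step of the construction already given. Translating each of the three \LTLf formulas $env$, $\dutytask$, and $\rightstask$ into a corresponding \DA is doubly exponential in the formula size, which is the dominant cost. Computing $\llbracket env \rrbracket$ by solving the safety game on $\A_\env$ and restricting it to $\A'_\env$ is linear in $|\A_\env|$. Building the product arena $\A$ of $\A'_\env$, $\A_{\duties}$, and $\A_{\rights}$ is polynomial in $|\A'_\env| \cdot |\A_{\duties}| \cdot |\A_{\rights}|$, hence remains doubly exponential in $|env| + |\dutytask| + |\rightstask|$. Finally, computing the agent winning region $\wina_{\rights}$ by the least-fixpoint reachability procedure is linear in the arena size, and by Lemma~\ref{lem:realizability} checking realizability reduces to testing $I \in \wina_{\rights}$. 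All steps together stay in 2EXPTIME.

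For hardness, I will reduce from standard \LTLf synthesis under environment specifications, which is 2EXPTIME-complete by~\cite{DegVa15}. Given any instance $(env, \task)$ of the standard problem, I will set $\P = (env, \task, \true)$, treating $\task$ as the duty and taking the trivially satisfied formula as the right. Since every nonempty prefix satisfies $\true$, any agent strategy $\stag$ enforcing $\task$ is automatically right-aware for $\true$: for each prefix $h$ of $\play(\stenv, \stag)$, the strategy $\stag$ itself witnesses enforcement of $\task \wedge \true \equiv \task$ with respect to $h$. Hence $\P$ is realizable iff the original instance is, and the 2EXPTIME lower bound transfers directly.

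There is no real obstacle here: the main construction and its correctness have already been established in Theorem~\ref{thm:correctness}, so membership reduces to routine complexity bookkeeping, and the hardness reduction is essentially immediate once one observes that setting $\rightstask = \true$ collapses the problem to standard \LTLf synthesis.
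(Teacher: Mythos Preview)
Your proposal is correct and follows essentially the same approach as the paper: membership by tracking the 2EXPTIME cost of the \DA constructions plus polynomial-time product and linear-time reachability solving, and hardness by reduction from standard \LTLf synthesis. Your argument is in fact more explicit than the paper's, which simply declares hardness ``immediate'' from~\cite{DegVa15} without spelling out the $\rightstask = \true$ reduction.
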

\begin{proof}
We prove from the following two aspects.
\item{Membership.}
Constructing the automata from \LTLf formulas $\dutytask, \rightstask$ and $env$ contributes to the main computational complexity of solving $\P$, which takes 2EXP\-TIME. 
The final reachability game can be constructed in polynomial time in the size of the DAs $\A_{\env}$, $\A_{\duties}$ and $\A_{\rights}$. Solving the final reachability game can be done in linear time in the size of the game arena.

\item{Hardness.}
 Immediate from 2EXPTIME-completeness of \LTLf synthesis itself~\cite{DegVa15}.
\end{proof}

\begin{theorem}\label{thm:synthesis-complexity}
    Let $\P$ be a problem of \LTLf synthesis with duties and rights. Then computing a strategy solving $\P$ can take, in the worst case, double-exponential time in the size of $|\dutytask|+|\rightstask|+|env|$.
\end{theorem}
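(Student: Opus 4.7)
The plan is to trace through the construction of Section~\ref{sec:3-1} and observe that only one step contributes a double-exponential blow-up, while every subsequent step is polynomial in the sizes of the automata it processes. First, I would translate each of the \LTLf formulas $env$, $\dutytask$, and $\rightstask$ into its corresponding \DFA $\A_\env$, $\A_\duties$, $\A_\rights$ via the standard \LTLf-to-\DFA construction; this is the only step whose output is worst-case double-exponential in the size of the input formula, and it is exactly the cost already accounted for in the membership half of Theorem~\ref{thm:realizability-complexity}.

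Next, I would bound the cost of the two game-solving phases. Computing the environment winning region $\wine$ by solving the safety game on $\A_\env$ is linear in $|\A_\env|$, and the restriction to $\A'_\env$ is linear as well. The product $\A$ of $\A'_\env$, $\A_\duties$, and $\A_\rights$ has state space $\wine \times Q_\duties \times Q_\rights$, still double-exponential in $|\dutytask|+|\rightstask|+|env|$; the agent winning region $\wina_\rights$ of the reachability game on $\A$ is then computed by the usual least fixpoint in linear time in $|\A|$. By Theorem~\ref{thm:reachsafe-as-reach}, the reachability-safety objective giving $\A_1'$ is also obtained linearly from $\A$, and the resulting reachability game is solved in linear time, producing $\wina$ together with the approximates $\wina^j$ used to define the output function.

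What distinguishes the synthesis bound from the realizability bound is the additional construction of the transducer $\T$. Here the key observation is that $\T$ shares its state space with $\A$, and both $\varrho$ and $\tau$ can be read off directly from the data already computed: for each state $q \in \wina^{j+1}\setminus\wina^j$ and each $X \in 2^\X$, the set $\tau(q,X)$ is determined by looking one transition ahead in $\A$ and testing membership in $\wina^j$, which is polynomial in $|\A|$. A concrete strategy is then extracted by restricting $\tau$ arbitrarily to a single value, which is a linear pass over $\T$.

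Summing the costs, the overall running time is dominated by the three \LTLf-to-\DFA translations and is therefore double-exponential in $|\dutytask|+|\rightstask|+|env|$, matching the lower bound inherited from standard \LTLf synthesis~\cite{DegVa15} via Theorem~\ref{thm:realizability-complexity}. There is no genuine obstacle beyond careful bookkeeping; the only point that deserves explicit verification is that the transducer construction, which is the extra ingredient over realizability, does not exceed the size of $\A$ and hence does not add a further exponential.
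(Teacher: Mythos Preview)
Your proposal is correct and follows essentially the same approach as the paper: the paper's own proof is a single sentence stating that the bound is an ``immediate consequence of the construction of $\T$ and the membership proof of Theorem~\ref{thm:realizability-complexity},'' and what you have written is precisely an unpacking of that sentence into its constituent cost estimates. Your additional remark about the lower bound is not needed for this particular theorem (which is stated as an upper bound on the construction), but it does no harm.
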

\begin{proof}
Immediate consequence of the construction of $\T$ and the membership proof of Theorem~\ref{thm:realizability-complexity}.
\end{proof} 

We observe that if $env$ is specified, for example, using, e.g., PDDL~\cite{2019Haslum} instead of \LTLf, then the complexity with respect to the environment specification $env$ only becomes EXPTIME-complete~(membership from a construction, hardness from planning of Fully Observable Nondeterministic Domains~(FOND)~\cite{Rintanen04a}).

\smallskip
\noindent\textbf{Enforcing also rights while executing.} Given problem $\P= (env, \dutytask, \rightstask)$, suppose we have synthesized a strategy $\stag$ for $\P$, which enforces $\dutytask$ and is right-aware for $\rightstask$, and while executing $\stag$, the agent wants to satisfy also its rights $\rightstask$. Then we can consider the history $h$ generated with the environment so far, and synthesize a strategy $\stag_h$, that enforces $\dutytask \wedge \rightstask$ with respect to history $h$. This can take 2EXPTIME, as shown by Theorem~\ref{thm:enforce-with-h}.

Nevertheless, if we consider the construction above, we actually do not need to compute the new strategy $\stag_h$ from scratch. This is because we can, base on the immediate results obtained from computing the original strategy $\stag$, to construct a transducer $\T_{\rights}$ for generating $\stag_h$ of a given history $h$. In particular, this transducer is independent of $h$. Therefore, we can construct $\T_{\rights}$ apriori, and use it to obtain $\stag_h$ when the agent chooses to satisfy $\rightstask$ after history $h$. The essential ingredients for constructing $\T_{\rights}$ is the DA $\A = (2^{\X \cup \Y}, Q, I, \delta, \reach(R_{\duties} \cap R_{\rights}))$ and the agent winning region $\wina_{\rights} = \bigcup_{0 \leq i \leq m} \wina_{\rights}^i$. We construct $\T_{\rights} = (2^{\X \cup \Y}, Q, I, \varrho_{\rights}, \tau_{\rights})$ as follows:
\begin{itemize}
    \item
    $2^{\X \cup \Y}$, $Q$ and $I$ are the same as in $\A$;
    \item
    $\varrho_{\rights}: Q \times 2^{\X} \rightarrow 2^Q$ is the transition function such that 
    $\varrho_{\rights} (q, X) = \{q'\mid q' = \delta (q, X \cup Y) \text{ and } Y \in \tau_{\rights}(q)\};$
	\item 
	$\tau_{\rights}: Q \times 2^\X \rightarrow 2^{2^{\Y}}$ is the output function such that $\forall X \in 2^\X$, $\tau_{\rights}(q, X) = \{ Y \mid \delta(q, X \cup Y) \in \wina_{\rights}^i \}$ if $q \in \wina_{\rights}^{i+1} \backslash \wina_{\rights}^i$, otherwise $\tau_{\rights}(q, X) = 2^\Y$.
\end{itemize}

Suppose while executing $\stag$, which enforces $\dutytask$ and is right-aware for $\rightstask$, the agent chooses to satisfy $\rightstask$ after history $h$, the transducer $\T_{\rights}$ generates an agent strategy $\stag_h: (2^\X)^+ \to 2^{\Y}$ in the following way: for every $\xi^k \in (2^{\X})^+$ that is compatible with history $h = (X_0 \cup Y_0) (X_1 \cup Y_1) \ldots (X_i \cup Y_i)$

\noindent$
\stag_h(\xi^k) = 
\begin{cases}
    Y_k & \text{ if } 0 \leq k \leq i \\
    \stopact & \text{ if } \run(\A, \pi^{k{-}1}) \text{ visited } R_{\duties} \cap R_{\rights} \\
    Y \in \tau(q_k, X_k) & \text{ otherwise. }
\end{cases}
$
where $\run(\A, \pi^{k-1}) = q_0q_1q_2 \ldots q_k$ such that $q_0 = I$, and $\pi^{k-1} = (X_0 \cup Y_0) (X_1 \cup Y_1) \ldots (X_{k-1} \cup Y_{k-1})$. Intuitively, given history $h$, $\T_{\rights}$ generates a strategy $\stag_h$ by first following $h$, and after $h$, choosing suitable agent action to enforce the play to get closer to $R_{\duties} \cap R_{\rights}$. At the end, $\stag_h$ keeps playing $\stopact$ right after visiting $R_{\duties} \cap R_{\rights}$.
 
\begin{theorem}
    Let $\P$ be a problem of \LTLf synthesis with duties and rights, $\stag$ be an agent strategy computed by $\T$ that solves the synthesis of $\P$, and $\stag_h$ be an agent strategy that is generated by $\T_{\rights}$ for a history $h \in (2^{\X \cup \Y})^*$. Then $\stag_h$ \emph{enforces} $\dutytask \wedge \rightstask$, with respect to history $h$, i.e., $\stag_h \ \rhd_h  \dutytask \wedge \rightstask$.
\end{theorem}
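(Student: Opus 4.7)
The plan is to unpack Definition~\ref{def:strategy-with-h} and verify its two requirements for $\stag_h$ independently. Fix an arbitrary $\stenv \in \llbracket env \rrbracket$ such that $\play(\stenv, \stag_h)$ has $h$ as a prefix. By construction of $\stag_h$, on every input $\xi^k$ that is a prefix of $h$ the strategy literally replays the corresponding $Y_k$ from $h$; combined with the compatibility of $\stenv$ with $h$, this immediately ensures the joint play starts with $h$. So the only substantive content is to show that $\play(\stenv, \stag_h) \models \dutytask \wedge \rightstask$.

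The key invariant to establish is that $q_{i+1} := \delta(I,h) \in \wina_{\rights}$, i.e., the state reached after consuming $h$ in the product arena $\A$ lies in the agent winning region of the reachability game with goal $R_{\duties} \cap R_{\rights}$. This is where the hypothesis that $h$ is produced while executing $\stag$ becomes indispensable: $\stag$ was generated by $\T$ using the reachability-safety reduction in Step~2 of Section~\ref{sec:3-1}, whose safety component restricts all visited states to $\wina_{\rights}$. So $\run(\A,h)$ lies entirely in $\wina_{\rights}$, and in particular $q_{i+1} \in \wina_{\rights}$; this observation is essentially a corollary of the proof of Lemma~\ref{lem:synthesis}.

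From this point the argument is a standard ranking descent along the least fixpoint $\wina_{\rights} = \bigcup_{0 \leq j \leq m} \wina_{\rights}^j$. By definition of $\tau_{\rights}$, if $q \in \wina_{\rights}^{j+1} \setminus \wina_{\rights}^j$ then every $Y \in \tau_{\rights}(q, X)$ satisfies $\delta(q, X \cup Y) \in \wina_{\rights}^j$ regardless of the environment's choice of $X$; so each post-$h$ step strictly decreases the rank. After at most $\mathrm{rank}(q_{i+1})$ moves the play enters $\wina_{\rights}^0 \subseteq R_{\duties} \cap R_{\rights}$, at which point $\stag_h$ switches to $\stopact$. Hence $\play(\stenv, \stag_h)$ ends at a state in $R_{\duties} \cap R_{\rights}$, which by construction of $\A_{\duties}$ and $\A_{\rights}$ means the play satisfies both $\dutytask$ and $\rightstask$.

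I expect the main obstacle to be the invariant in the second paragraph: cleanly articulating that the safety guarantee maintained by $\stag$ (never leaving $\wina_{\rights}$) is inherited by every prefix $h$ of its plays, so that $\stag_h$ is entitled to start the ranking descent from $q_{i+1}$. Once this invariant is secured, the ranking descent itself is routine and mirrors the analogous step in the proof of Lemma~\ref{lem:synthesis}; no new complexity or automata construction is required, since $\T_{\rights}$ is derived from the same product automaton $\A$ and the same fixpoint sequence $\{\wina_{\rights}^j\}_j$ already computed for $\T$.
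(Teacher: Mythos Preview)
Your proposal is correct and follows essentially the same approach as the paper: both invoke Lemma~\ref{lem:synthesis} to establish that $\run(\A,h)$ stays in $\wina_{\rights}$, and then argue that the construction of $\T_{\rights}$ forces the continuation to reach $R_{\duties}\cap R_{\rights}$. The paper compresses your ranking-descent paragraph into a single ``by construction of $\stag_h$'' clause, but the underlying reasoning is identical.
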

\begin{proof}
    Let $\stag_h$ be an arbitrary strategy generated by $\T_{\rights}$ for history $h$, and $\stenv \in \llbracket env \rrbracket$ be an arbitrary environment strategy that enforces $env$. By Lemma~\ref{lem:synthesis}, we have that $\run(\A, h)$ only visits states in $\wina_{\rights}$. By construction of $\stag_h$, $\play(\stenv, \stag)$ has $h$ as a prefix, and $\play(\stenv, \stag) \models \dutytask \wedge \rightstask$. Therefore, $\stag_h \ \rhd_h  \dutytask \wedge \rightstask$ holds.
\end{proof}

The advantage of building transducer $\T_{\rights}$ is that this transducer works for any history $h$ generated by $\stag$ that enforces $\dutytask$ and is right-aware for $\rightstask$. Moreover, when building the transducer $\T$ for $\stag$, we already have all the ingredients to build also $\T_{\rights}$, with only a constant overhead~(i.e., since we are computing two transducers, sharing essentially the same cost, instead of one). 

We now extend Example~\ref{example:1} to show how to utilize the transducer $\T_\rights$ in the presence of robot also achieving rights. 
\begin{example}
Suppose the robot decides to also achieve its rights $\rightstask = \Diamond (\textit{BatteryFull})$ while cleaning room A. Let us assume that the by now the running history is $h$. The robot will look into $\T_\rights$ and choose a strategy $\sigma_h$ out of $\T_\rights$ that allows it to enforce $\dutytask \wedge \rightstask$.
\end{example}

\section{Handling Further Duties and Rights While Executing}\label{sec:further-d-r}

Let us focus on duty only first. Commonly in synthesis the agent only gets one task (duty) to accomplish, after which, the agent can terminate.  However, in practice, further tasks might arrive while executing the current task, e.g., a new room to clean while the robot is cleaning the rooms it got assigned at the beginning. Intuitively, the new task can be considered as an update of the previous task.

Synthesizing updated specifications has been recently studied in Formal Methods, under the name of \emph{live synthesis}~\cite{livesynthesis}, where the desired properties are specified in \LTL and can get updated while executing a strategy of the original \LTL specification. The goal of live synthesis is to synthesize a new strategy to replace an already running strategy. In particular, the correct handover from the already running strategy to the new strategy is specified by an extension of \LTL, called Live\LTL. For specifications in Live\LTL, the synthesis problem shares the same complexity bound as standard \LTL synthesis. 

Despite that synthesis problems of \LTLf can be solved by a reduction to suitable problems of \LTL, since \LTLf can be encoded in \LTL, such reductions do not seem promising, as shown in~\cite{ZTLPV17} for \LTLf synthesis, and~\cite{ZhuGPV20,GiacomoSVZ20} for \LTLf synthesis under environment specifications. 
So while we want to consider an \LTLf variant of live synthesis, we avoid a detour to Live\LTL synthesis and devise a direct synthesis technique for \LTLf. 

We start by observing that the crucial difference between new duties and the ongoing duties is that, the agent should enforce ongoing duties from the very beginning, but enforce new duties after a history, i.e., starting from the moment that the new duties are assigned to the agent.
\begin{definition}\label{def:strategy-after-h}
    Let $\varphi$ be an \LTLf formula and $h \in (2^{\X \cup \Y})^*$ be a history. An agent strategy $\stag$ \emph{enforces} $\varphi$ after history $h$, denoted by $\stag \ \rhd_\afh \ \varphi$, if $\forall \stenv \in \llbracket env \rrbracket$ such that $\play(\stenv, \stag)$ has $h$ as a prefix, we have that $\play(\stenv, \stag), |h| \models \varphi$.
\end{definition}
Recall that Definition~\ref{def:strategy-with-h} describes how an \emph{agent strategy enforces $\varphi$ with respect to a history $h$}, and Definition~\ref{def:strategy-after-h} above describes how an \emph{agent strategy enforces $\varphi$ after a history $h$}.
There is a significant difference between these two notions, since we use them to differentiate how agent strategies enforce ongoing duties and new duties. In particular, an agent strategy enforces ongoing duties with respect to a history $h$, but enforces new duties after a history $h$.

Note that the environment strategy enforcing $env$ in any case starts from the very beginning. Moreover, we only consider the cases where both the environment strategy $\stenv$ and the agent strategy $\stag$ are compatible with $h$. In other words, we need to consider environment strategies that are in the set
$\llbracket env \rrbracket^h = \{ \stenv \mid \forall \stag$ that is compatible with $h$ we have $\trace(\stenv, \stag)$ has $h$ as a prefix and $\trace(\stenv, \stag) \models_\forall env\}$.

In order to compute a strategy $\stag$ that enforces $\varphi$ after $h$, we split the trace $\trace(\stenv, \stag)$ into two phases. 
In phase \rom{1}, both strategies $\stenv$ and $\stag$ are compatible with $h$. 
In phase \rom{2}, the agent focuses on the environment strategies that enforce $env$ with respect to $h$. We show how to compute a strategy $\stag$ enforcing $\varphi$ after $h$ by addressing two phases in reverse order. 
Specifically, we first compute the set of environment strategies that start executing after $h$, but enforce $env$ when the compatible traces are concatenated to $h$. We denote this set of environment strategies by 
{\centerline{$\llbracket env, \afh \rrbracket = \{ \stenv \mid \forall \stag.~h \cdot \trace(\stenv, \stag) \models_\forall env\}.$}}

The fact that we can focus on this set of environment strategies is justified by the following lemma, which is easy to prove considering the two definitions of $\llbracket env \rrbracket^h$ and $\llbracket env,\afh \rrbracket$.
\begin{lemma}\label{lem:env-after-h}
    For every $\stenv\in \llbracket env\rrbracket^h$ there exists $\stenv'\in \llbracket env, \afh \rrbracket$ s.t. for every $\lambda= h|_\Y\cdot \lambda'$\, we have 
    $\stenv(\lambda) =  \stenv'(\lambda')$.
    
    Viceversa, for every $\stenv'\in \llbracket env, \afh \rrbracket$ there exists $\stenv\in \llbracket env \rrbracket^h$ s.t. for every $\lambda= h|_\Y\cdot \lambda'$\, we have 
    $\stenv(\lambda) =  \stenv'(\lambda')$.  
\end{lemma}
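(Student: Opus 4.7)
The plan is to establish both directions by an explicit construction: the bijective correspondence is essentially a time shift of the environment strategy's input by the length of $h$.

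For the forward direction, given $\stenv\in \llbracket env \rrbracket^h$, I would define $\stenv'(\lambda'):=\stenv(h|_\Y\cdot \lambda')$ for every $\lambda'\in (2^\Y)^*$. To verify $\stenv'\in \llbracket env, \afh \rrbracket$, pick an arbitrary agent strategy $\stag$ and consider the concatenated trace $h\cdot \trace(\stenv',\stag)$. The key step is to exhibit an agent strategy $\stag^\star$ compatible with $h$ such that $\trace(\stenv,\stag^\star) = h\cdot \trace(\stenv',\stag)$: define $\stag^\star$ to replay the $\Y$-part of $h$ along the prefix and then behave like $\stag$ afterwards (with the indices on the environment side shifted by $|h|$). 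The shift-invariance built into the definition of $\stenv'$ ensures that the two traces coincide step-by-step. Then by hypothesis $\trace(\stenv,\stag^\star)\models_\forall env$, which is exactly what we need.

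For the reverse direction, given $\stenv'\in \llbracket env, \afh \rrbracket$, I would define $\stenv$ so that on prefixes of length less than $|h|$ it reproduces the $\X$-part of $h$ (in particular $\stenv(h|_\Y^{<j})$ returns the $j$-th $\X$-component of $h$), and on longer prefixes of the form $h|_\Y\cdot \lambda'$ it returns $\stenv'(\lambda')$. To verify $\stenv\in \llbracket env \rrbracket^h$, pick any agent strategy $\stag$ compatible with $h$. Compatibility of $\stag$ with $h$ together with the definition of $\stenv$ on short prefixes forces $\trace(\stenv,\stag)$ to have $h$ as a prefix; beyond $h$, the construction again aligns $\trace(\stenv,\stag)$ with $h\cdot \trace(\stenv',\stag')$ for the obvious $\stag'$ (namely $\stag'(\xi):=\stag(h|_\X\cdot \xi)$). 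Hence $\trace(\stenv,\stag)\models_\forall env$ follows from the hypothesis on $\stenv'$.

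The main obstacle is purely notational: keeping track of which player's history indexes each strategy (the environment is a function of $\Y$-histories, the agent of $\X$-histories) and confirming that the prefix $h$ that appears in the definition of $\llbracket env \rrbracket^h$ is consistent with both the $h|_\Y$-shift used for the environment and the $h|_\X$-shift used implicitly to re-align the agent. Once these projections are written out and the inductive step verifying $\stenv(h|_\Y\cdot \lambda')=\stenv'(\lambda')$ is spelled out, the equality of traces is immediate and the $\models_\forall env$ conclusion transfers between the two sets without further work. No new automaton or game-theoretic machinery is needed; the statement is essentially a change-of-variable lemma that justifies restricting attention to $\llbracket env,\afh\rrbracket$ in the subsequent phase-\rom{2} synthesis construction.
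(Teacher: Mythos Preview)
Your proposal is correct and, in fact, cleaner than the paper's own argument. You give a direct construction of the corresponding strategy in each direction (the time-shift $\stenv'(\lambda')=\stenv(h|_\Y\cdot\lambda')$ forward, and the ``replay $h|_\X$ then follow $\stenv'$'' map backward) and verify membership by exhibiting a matching agent strategy and appealing to the defining property of the source set. The paper instead proceeds by contradiction: it assumes no matching strategy exists in the target set, picks a witness sequence $\lambda$ at which all candidates disagree, and then splices a given member of $\llbracket env\rrbracket^h$ with the shifted strategy at that point to produce a contradiction. The spliced strategy $\stenv''$ the paper builds is essentially your direct construction, only arrived at through an unnecessary detour; the contradiction wrapper also silently relies on $\llbracket env\rrbracket^h$ being nonempty to have something to splice against. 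Your approach avoids that and makes the change-of-variable nature of the lemma explicit.

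One small point worth making explicit in your reverse direction: the environment strategy $\stenv$ you build must be a total function on $(2^\Y)^*$, so you should say what it does on $\Y$-sequences that neither extend nor are extended by $h|_\Y$. Any choice works, because for every agent strategy $\stag$ compatible with $h$, your definition of $\stenv$ on proper prefixes of $h|_\Y$ forces $\trace(\stenv,\stag)$ to pass through $h$, so the off-branch values are never consulted in the verification of $\stenv\in\llbracket env\rrbracket^h$. This is exactly the kind of bookkeeping you flag under ``purely notational,'' but it is worth one sentence.
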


\begin{proof}
By contradiction, considering the two definitions of $\llbracket env\rrbracket^h$ and $\llbracket env, \afh \rrbracket$.
Let us consider the first direction (the other direction is similar). 

Suppose that there exists a strategy $\stenv\in \llbracket env, \afh \rrbracket$, and for every strategy $\hat{\stenv}\in \llbracket env \rrbracket^h$, we have that for $h|_\Y\cdot \lambda$, $\stenv(\lambda) \neq  \hat{\stenv}(h|_\Y\cdot \lambda)$. Let's consider any $\stenv' \in \llbracket env \rrbracket^h$ such that for all the prefixes $l$ of $\lambda$, it holds that $\stenv(l) =  \stenv'(h|_\Y \cdot l)$, and then for $\lambda$, it holds that $\stenv(\lambda) \neq  \stenv'(h|_\Y \cdot \lambda)$.
Note that every trace $\pi$, that is compatible with $h \cdot \stenv$, holds that $\pi \models_\forall env$. Now consider the strategy $\stenv''$ defined as follows: for every $\lambda'' \in (2^\Y)^*$,

$\stenv''(\lambda'')=
\begin{cases}
\stenv'(\lambda'')  &\text{ if } \lambda'' \text{ does not have } h|_\Y \text{ as a prefix }\\
\stenv'(\lambda'')  &\text{ if } \lambda'' \text{ is a proper prefix of } h|_\Y \cdot \lambda \\
\stenv(\iota)  &\text{ if } \lambda''= h|_\Y \cdot \iota \text{ and } \lambda \text{ is a prefix of } \iota
\end{cases}
$
It is immediate to see that $\stenv''$ belongs to the set $\llbracket env \rrbracket^h$, leading to a contradiction.
\end{proof}

To compute  $\llbracket env, \afh \rrbracket$, we first compute $\llbracket env \rrbracket$, represented as $\A'_\env = (2^{\X \cup \Y}, \wine, I_\env, \delta'_\env, \safe(\wine))$, as described in Section~\ref{sec:synthesis-rights}. In order to synchronize the starting point of the environment to be aligned with the instant after history $h$, we run $\A'_\env$ on $h$ to obtain a new \DA $\A'_{\env,\afh} = (2^{\X \cup \Y}, \wine, I_{\env,\afh}, \delta'_\env, \safe(\wine))$ that differs from $\A'_\env$ only on the initial state, and $I_{\env,\afh} = \delta'_\env(I_\env, h)$. The following lemma shows that $\A'_{\env,\afh}$ precisely captures $\llbracket env, \afh \rrbracket$, which is easy to prove by construction.

\begin{lemma}
    Let $env$ be an \LTLf formula specifying a safety property, $h \in (2^{\X \cup \Y})^*$ be a finite history, and $\A'_\env$ be constructed as above. Then $\A'_{\env,\afh}$ represents the set $\llbracket env, \afh \rrbracket$.
\end{lemma}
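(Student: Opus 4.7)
The plan is to establish both inclusions. The crucial observation is that $\A'_{\env,\afh}$ differs from $\A'_\env$ only in its initial state, and that initial state $I_{\env,\afh} = \delta'_\env(I_\env, h)$ is exactly the state $\A'_\env$ reaches after reading $h$. Since $\A'_\env$ already captures $\llbracket env \rrbracket$ through its ``stay-in-the-winning-region'' structure, the task reduces to showing that this property transfers across the prefix $h$.

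For the first inclusion, I would fix an environment strategy $\stenv$ captured by $\A'_{\env,\afh}$ and an arbitrary agent strategy $\stag$. By construction of $\A'_{\env,\afh}$, the run on $\trace(\stenv, \stag)$ from $I_{\env,\afh}$ stays entirely in $\wine$. Prepending the run of $\A'_\env$ from $I_\env$ on $h$, which by assumption terminates at $I_{\env,\afh} \in \wine$, yields a run of $\A'_\env$ from $I_\env$ on $h \cdot \trace(\stenv, \stag)$ that remains in $\wine$ throughout, hence satisfies $\safe(\wine)$. Since $\A'_\env$ captures $\llbracket env \rrbracket$, this gives $h \cdot \trace(\stenv, \stag) \models_\forall env$ for every $\stag$, so $\stenv \in \llbracket env, \afh \rrbracket$.

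For the converse, let $\stenv \in \llbracket env, \afh \rrbracket$. For every agent strategy $\stag$, we have $h \cdot \trace(\stenv, \stag) \models_\forall env$, so the run of $\A'_\env$ from $I_\env$ on this trace never leaves $\wine$. Splitting the run at position $|h|$, the suffix is precisely the run of $\A'_{\env,\afh}$ from $I_{\env,\afh}$ on $\trace(\stenv, \stag)$, which therefore also stays in $\wine$. Hence $\stenv$ is captured by $\A'_{\env,\afh}$.

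The main subtlety, rather than a genuine obstacle, is that $\A'_{\env,\afh}$ represents an entire \emph{set} of environment strategies through its nondeterministic transition structure rather than a single strategy. Resolving this relies on the same property already invoked when arguing that $\A'_\env$ captures $\llbracket env \rrbracket$, namely the result of~\cite{BernetJW02} that in safety games the maximal ``stay in the winning region'' nondeterministic strategy captures exactly the set of all winning strategies. Once this is granted, the lemma follows almost immediately from the construction, as outlined above.
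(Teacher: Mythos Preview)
Your proposal is correct and follows essentially the same approach as the paper's proof: both directions rely on prepending or splitting the run at $h$ and on the fact that environment winning strategies in the safety game are exactly those that stay in $\wine$. The only presentational difference is that the paper first makes explicit the transducer $\T_{\env,\afh}$ that $\A'_{\env,\afh}$ encodes (with output function $\tau_{\env,\afh}$ returning those $X$ that keep the run in $\wine$ against every $Y$), whereas you acknowledge this subtlety verbally and invoke the Bernet--Janin--Walukiewicz characterization directly; the underlying argument is the same.
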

\begin{proof}
Note that $\A'_{\env,\afh}$ can actually be considered as a representation of the following transducer $\T_{\env,\afh} = (2^{\X \cup \Y}, \wine, I_{\env,\afh}, \varrho_{\env,\afh}, \tau_{\env,\afh})$ that encodes a set of environment strategies, where
\begin{compactitem}
    \item
    $2^{\X \cup \Y}$, $\wine$ and $I_{\env,\afh}$ are the same as in $\A'_{\env,\afh}$;
    \item
    $\varrho_{\env,\afh}: \wine \times 2^{\Y} \mapsto 2^{\wine}$ is the transition function s.t. 
    $\varrho_{\env,\afh} (q, Y) = \{q' \mid q' = \delta'_\env (q, X \cup Y) \text{ and } X \in \tau_{\env,\afh}(q)\}$;
	\item 
	$\tau_{\env,\afh}: \wine \to 2^{2^{\X}}$ is the output function such that $\tau_{\env,\afh}(q) = \{ X \mid \text{ if } \forall Y \in 2^\Y. \delta'_\env(q, X \cup Y) \in \wine \}$.
\end{compactitem}
$\T_{\env,\afh}$ generates an environment strategy $\stenv: (2^\Y)^* \to 2^{\X}$ in the following way: $\stenv(\epsilon) = X \in \tau_{\env,\afh}(I_{\env,\afh})$, and for every $\lambda^k \in (2^{\Y})^+, \stag(\lambda^k) = X_{k+1} \in \tau_{\env,\afh}(q_{k+1})$, where $q_{k+1}$ indicates the last state of $\run(\A, \pi^{k}) = q_0q_1q_2 \ldots q_{k+1}$ such that $q_0 = I_{\env,\afh}$, and $\pi^{k} = (X_0 \cup Y_0) (X_1 \cup Y_1) \ldots (X_{k} \cup Y_{k})$. We denote the set of environment strategies that $\T_{\env,\afh}$ can generate by $\llbracket\T_{\env,\afh}\rrbracket$, and now prove the lemma by showing $\llbracket\T_{\env,\afh}\rrbracket = \llbracket env, \afh \rrbracket$.

\item{($\Leftarrow$)}
Let $\stenv$ be an arbitrary environment strategy such that $\stenv \in \llbracket\T_{\env,\afh}\rrbracket$. Note that $\A'_{\env,\afh}$ only differs from $\A'_\env$ on the initial states, therefore, by construction, for every agent strategy $\stag$, the run $\run(\A_\env, \pi) = q_0 q_1 q_2 \ldots \in Q^\omega$ of the induced trace $\pi = \trace(\stenv, \stag)$ satisfies that $q_0 = I_{\env,\afh}$ and $\forall k \geq 0: q_k \in \wine$. Moreover, since $\run(\A'_\env, h)$ is guaranteed to visit $\wine$ only, it holds that $h \cdot \pi \models_\forall env$, and therefore $\stenv \in \llbracket env, \afh \rrbracket$.

\item{($\Rightarrow$)}
Let $\stenv$ be an arbitrary environment strategy s.t. $\stenv \in \llbracket env, \afh \rrbracket$. By definition, for every agent strategy $\stag$, $\pi = \trace(\stenv, \stag)$ holds that $h \cdot \pi \models_\forall env$ such that $\rho = \run(\A'_\env, h \cdot \pi)$ visits states in $S$ only~(recall that $\A_\env = (2^{\X \cup \Y}, Q_\env, I_\env, \delta_\env, \safe(S))$ is the \DA of $env$). 
Since it is guaranteed that $\run(\A'_\env, h)$ visits states in $\wine$ only, in order to make $h \cdot \trace(\stenv, \stag)$ is winning for the environment for the safety game on $\A_\env$, $\stenv$ should be able to enforce the trace to stay in $\wine$. Otherwise, $\stenv$ cannot be a winning strategy for the environment. By construction, $\T_{\env, \afh}$ captures all such environment strategies, and therefore $\stenv \in \llbracket\T_{\env,\afh}\rrbracket$.
\end{proof}

Having $\llbracket env, \afh \rrbracket$ represented as DA $\A'_{\env,\afh}$, we can first construct the DA $\A_\varphi$ and then solve a reachability game on the product $\A'_{\env,\afh} \times \A_\varphi$ to abstract an agent strategy $\hat{\stag}$ that guides the play to satisfy the reachability condition of $\A_\varphi$, hence enforcing $\varphi$. The final agent strategy enforcing $\varphi$ after $h$ can be obtained by first copying $h$, and then switching to $\hat{\stag}$ after $h$. Formally, for every $\xi^k \in (2^\X)^+$ that is compatible with $h = (X_0 \cup Y_0) (X_1 \cup Y_1) \dots (X_i \cup Y_i) \in (2^{\X \cup \Y})^*$

$\stag(\xi^k) = 
\begin{cases}
    Y_k & \text{ if } 0 \leq k \leq i, \\
    \hat{\stag}(\iota) & \text{ if } \xi^k = h \cdot \iota.
\end{cases}
$

\begin{lemma}\label{lem:strategy-enforce-after-h}
    Let $\varphi$ be an \LTLf formula, $h \in (2^{\X \cup \Y})^*$ be a  history, and $\stag$ be constructed as above. Then $\stag$ enforces $\varphi$ after $h$.
\end{lemma}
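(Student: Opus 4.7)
The plan is to verify the two clauses of Definition~\ref{def:strategy-after-h}: that $\play(\stenv,\stag)$ has $h$ as a prefix for every compatible $\stenv$, and that $\play(\stenv,\stag), |h| \models \varphi$. The first clause is immediate from the definition of $\stag$ on prefixes of length at most $|h|$, since $\stag$ literally replays the $\Y$-projection of $h$ on every input $\xi^k$ that is compatible with $h$. So the real work is the second clause.

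I would fix an arbitrary $\stenv \in \llbracket env \rrbracket$ that is compatible with $h$, i.e., $\stenv \in \llbracket env \rrbracket^h$. By Lemma~\ref{lem:env-after-h} there exists a corresponding $\stenv' \in \llbracket env, \afh \rrbracket$ such that $\stenv(h|_\Y \cdot \lambda') = \stenv'(\lambda')$ for every $\lambda'$. This correspondence lets me decompose the induced trace as $\trace(\stenv,\stag) = h \cdot \trace(\stenv',\hat\stag)$, and analogously $\play(\stenv,\stag) = h \cdot \play(\stenv',\hat\stag)$, because beyond $h$ the agent plays according to $\hat\stag$ by definition, while $\stenv'$ reproduces the choices $\stenv$ would make after $h$. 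Hence showing $\play(\stenv,\stag), |h| \models \varphi$ reduces to showing $\play(\stenv',\hat\stag) \models \varphi$.

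For this reduced statement I would invoke the construction of $\hat\stag$ directly. Since $\A'_{\env,\afh}$ represents $\llbracket env, \afh \rrbracket$ (by the preceding lemma in the excerpt), the product $\A'_{\env,\afh} \times \A_\varphi$ is a reachability game whose agent winning strategies are exactly those that, against any $\stenv' \in \llbracket env, \afh \rrbracket$, drive the play into the accepting set $R_\varphi$ of $\A_\varphi$ while remaining in $\wine$. By hypothesis the game is winning and $\hat\stag$ is such a winning strategy, so $\play(\stenv',\hat\stag)$ reaches $R_\varphi$, which means $\play(\stenv',\hat\stag) \models \varphi$ by the acceptance condition of $\A_\varphi$. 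Combining this with the decomposition $\play(\stenv,\stag) = h \cdot \play(\stenv',\hat\stag)$ and the semantics of \LTLf at index $|h|$ yields $\play(\stenv,\stag), |h| \models \varphi$.

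The only delicate point — and what I expect to be the main obstacle — is a careful index bookkeeping when translating the statement ``$\play(\stenv',\hat\stag) \models \varphi$'' (evaluated at position $0$ of a trace that starts after $h$) into ``$\play(\stenv,\stag), |h| \models \varphi$'' (evaluated at position $|h|$ of the concatenated trace). This is essentially a standard shift argument for \LTLf semantics on the suffix $\pi_{|h|} \pi_{|h|+1} \cdots$, so it is routine, but it must be stated explicitly to make the chain from Lemma~\ref{lem:env-after-h} through the winning strategy in the product game to the final satisfaction claim airtight.
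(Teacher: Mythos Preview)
Your proposal is correct and follows essentially the same route as the paper: use Lemma~\ref{lem:env-after-h} to pass from $\stenv \in \llbracket env \rrbracket^h$ to a corresponding $\stenv' \in \llbracket env, \afh \rrbracket$, invoke that $\hat\stag$ is winning in the product game $\A'_{\env,\afh} \times \A_\varphi$ to get $\play(\stenv',\hat\stag)\models\varphi$, and then shift indices to conclude $\play(\stenv,\stag),|h|\models\varphi$. One small remark: Definition~\ref{def:strategy-after-h} does not have two separate clauses---the condition ``$\play(\stenv,\stag)$ has $h$ as a prefix'' is a hypothesis restricting which $\stenv$ are quantified over, not an obligation on $\stag$---but your observation that $\stag$ replays $h$ is still the right way to identify those $\stenv$ with $\llbracket env \rrbracket^h$, so the argument goes through unchanged.
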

\begin{proof}
Note that $\stag$ is constructed from $h$ and a strategy $\hat{\stag}$, which holds that for every $\hat{\stenv} \in \llbracket env, \afh \rrbracket$, $\play(\hat{\stenv}, \hat{\stag}) \models \varphi$. Hence, together with Lemma~\ref{lem:env-after-h}, it holds that for every $\stenv' \in \llbracket env \rrbracket^h$, $\play(\stenv', \stag), |h| \models \varphi$. Clearly, $\stag$ holds that for every $\stenv \in \llbracket env \rrbracket$ such that $\play(\stenv', \stag)$ has $h$ as a prefix, then $\play(\stenv, \stag), |h| \models \varphi$. Therefore, $\stag$ enforces $\varphi$ after $h$.
\end{proof}

The following theorem shows that computing an agent strategy enforcing $\varphi$ after a history $h$ is not more difficult than computing a strategy that just enforces $\varphi$.

\begin{theorem}\label{thm:enforce-after-h}
    Let $\varphi$ be an \LTLf formula and $h \in (2^{\X \cup \Y})^*$ be a history. Computing an agent strategy that enforces $\varphi$ after history $h$ is 2EXPTIME-complete in $\varphi$.
\end{theorem}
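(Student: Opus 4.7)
The plan is to mimic the two-part strategy used for Theorem~\ref{thm:enforce-with-h}, establishing membership via the construction given in the paragraphs preceding the statement, and hardness via a reduction from standard \LTLf synthesis under environment specifications.

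For membership, I will walk through the five ingredients of the construction and bound each one: (i) build $\A'_\env$ from $env$ by constructing the DA of the safety property and solving a safety game on it, which is doubly-exponential in $|env|$; (ii) adjust the initial state to obtain $\A'_{\env,\afh}$ by simulating $\A'_\env$ on $h$, which takes polynomial time in $|h|$ and the size of $\A'_\env$; (iii) construct the reachability DA $\A_\varphi$ from $\varphi$, which is doubly-exponential in $|\varphi|$; (iv) form the product $\A'_{\env,\afh} \times \A_\varphi$ in time polynomial in the sizes of the two factors; and (v) solve the reachability game on the product, which is linear in the size of the game arena. The dominant cost is therefore 2EXPTIME in $|\varphi|+|env|$, and correctness of the resulting strategy is already guaranteed by Lemma~\ref{lem:strategy-enforce-after-h}, so I do not need to re-argue it here.

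For hardness, the key observation will be that the special case $h = \epsilon$ makes Definition~\ref{def:strategy-after-h} collapse to the standard notion of enforcement: $\play(\stenv,\stag), 0 \models \varphi$ is the same as $\play(\stenv,\stag) \models \varphi$, and the empty history imposes no compatibility constraint on either the environment or the agent. Hence standard \LTLf synthesis under environment specifications reduces trivially to the problem at hand, and 2EXPTIME-hardness follows from that of \LTLf synthesis~\cite{DegVa15}.

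The hard part, if any, is purely bookkeeping: the construction and its semantic correctness have already been fully developed in the preceding lemmas, so the only residual task is to verify that each step of the pipeline fits within the claimed complexity bound, and that the reduction in the hardness direction genuinely preserves realizability. I do not anticipate any technical difficulty beyond these routine checks.
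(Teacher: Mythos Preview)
Your proposal is correct and follows essentially the same approach as the paper: membership via the doubly-exponential automata constructions for $\varphi$ and $env$ followed by a polynomial-size product and linear-time reachability solving, and hardness by specializing to $h=\epsilon$ so that the problem degenerates to standard \LTLf synthesis. The paper's proof is just a terser version of exactly this argument.
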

\begin{proof}
  We prove from the following two aspects.
    \item{Membership.}
    Constructing the automata from \LTLf formulas $\varphi$ and $env$ contributes to the main computational complexity of computing  such a strategy, which takes 2EXP\-TIME. 
    The final reachability game can be constructed in polynomial time in the size of the DAs $\A_{\env}$, $\A_{\varphi}$, consisting of safety game solving on $\A_{\env}$, restricting to $\wine$, and automata product. Solving the reachability game can be done in linear time in the size of the game arena.
  \item{Hardness.}
  Immediate from 2EXPTIME-completeness of \LTLf synthesis itself~\cite{DegVa15}.
\end{proof}

Building on the above results and the results in Section~\ref{sec:synthesis-rights}, we now enrich our synthesis setting by allowing both further duties and further rights, specified as \LTLf formulas $\fdutytask$ and $\frightstask$, respectively. 

\begin{definition}
Let $h$ be the formed history when further duties $\fdutytask$ and rights $\frightstask$ arrive. Agent strategy $\stag$ enforcing $\fdutytask$ is right-aware for $\frightstask$ after $h$, if $\forall \stenv \in \llbracket env \rrbracket$:
\begin{itemize}
    \item 
    $\play(\stenv, \stag)$ has $h$ as a prefix and $\play(\stenv, \stag), |h| \models \fdutytask$;
    \item
    for every prefix $l$ of $\play(\stenv, \stag)$ that has $h$ as a prefix, there exists an agent strategy $\stag_{l}$ that enforces $\fdutytask \wedge \frightstask$ after history $h$, i.e., $\stag_{l} \ \rhd_\afh \ \fdutytask \wedge \frightstask$.
    \end{itemize}
\end{definition}
The enriched synthesis problem that allows further duties and rights is defined as follows.
\begin{definition}[\LTLf synthesis for further duties and rights]\label{def:enriched-synthesis}
The problem is described as a tuple $\hat{\P}= (env, \dutytask, \rightstask, \fdutytask, \frightstask, h)$, where $env$ is an \LTLf formula specifying the environment safety specification, $\dutytask$ and $\rightstask$ are \LTLf formulas specifying duties and rights, respectively, $\fdutytask$ and $\frightstask$ are \LTLf formulas specifying further duties and rights that arrive after history $h$, respectively.
Realizability of $\hat{\P}$ checks whether there exists an agent strategy $\stag$ s.t.:
\begin{itemize}
    \item it enforces $\dutytask$ and is right-aware for $\rightstask$ wrt $h$;
    \item it enforces $\fdutytask$ and is right-aware for $\frightstask$ after $h$.
\end{itemize}
Synthesis of $\hat{\P}$ computes a strategy $\stag$ if exists.
\end{definition}  
Notice that, in Definition~\ref{def:enriched-synthesis}, we also only consider the cases where both the environment strategies and the agent strategies are compatible with $h$. 

We extend Example~\ref{example:1} to address the intuition of this class of synthesis problem.
\begin{example}
Suppose the robot is on its way to room A, while receiving a new duty of ``cleaning room B" $\fdutytask = \Diamond (\neg \textit{Dust\_B} \wedge \textit{RobotOut\_B})$. The robot has generated a history $h$ when receiving $\fdutytask$. Now, the robot has one strategy to enforce $\dutytask$ with respect to $h$ and enforce $\fdutytask$ after $h$: 
\begin{compactitem}
    \item
    Take the direction that passes the charging station to reach room A and clean it. Then go to room B and clean it. The remaining battery after cleaning is not enough for the robot to reach the charging station.
\end{compactitem}
In this case, the robot would refuse the new duty $\fdutytask$, since completing it would be conflicted with maintaining the robot rights  $\rightstask$. 
\end{example}

On the other hand, let us consider the situation where the cleaning robot is able to handle original duties and rights, together with further duties and rights. 
\begin{example}
Suppose the cleaning robot is cleaning room A, while receiving a new duty of ``cleaning room C" $\fdutytask = \Diamond (\neg \textit{Dust\_C} \wedge \textit{RobotOut\_C})$, and a new right of ``emptying the garbage collector" $\frightstask = \Diamond (\textit{Collector\_Empty})$. The robot has generated a history $h$ when receiving $\fdutytask$ and $\frightstask$. 
Since this is a circular hallway, the robot again has two directions to reach room C. Nevertheless, the robot only takes the strategy that allows it to reach the charging station, and the garbage station whenever it wants.
\end{example}

\subsection{Synthesis Technique}
Given problem $\hP= (env, \dutytask, \rightstask, \fdutytask, \frightstask, h)$, the main complication comes from further duties and rights that arrive after history $h$. This is because apart from enforcing $\fdutytask$ while maintaining $\frightstask$, the agent should also enforce unfinished $\dutytask$ and be right-aware for $\rightstask$. To synthesize an agent strategy that is able to do so, we do the following: 
\myi compute the agent winning region $\wina_{\rights}$ from where the agent is able to lead the trace to satisfy $\dutytask$ and $\rightstask$;
\myii compute the agent winning region $\wina_{\rights \wedge \fr}$ from where the agent is able to lead the trace to also satisfy both $\fdutytask$ and $\frightstask$, but after $h$;
\myiii synthesize an agent strategy $\stag$ enforcing $\dutytask$ and is right-aware for $\rightstask$ wrt $h$, also enforcing $\fdutytask$ being right-aware for $\frightstask$, but after $h$.
We now elaborate on every step.

\smallskip
\smallskip
\noindent\textbf{Step 1.} Compute $\wina_\rights$. As described in Section~\ref{sec:3-1}, we can construct the corresponding DAs $\A_{\duties} = (2^{\X \cup \Y}, Q_{\duties}, I_{\duties}, \delta_{\duties}, \reach(R_{\duties}))$ and $\A_{\rights} = (2^{\X \cup \Y}, Q_{\rights}, I_{\rights}, \delta_{\rights}, \reach(R_{\rights}))$ of $\dutytask$ and $\rightstask$, respectively. The agent winning region $\wina_\rights$ can be computed via a least fixpoint computation on the product DA $\A_{\duties \wedge \rights } = (2^{\X \cup \Y}, Q_{\duties \wedge \rights }, I_{\duties \wedge \rights}, \delta_{\duties \wedge \rights}, \reach(R)_{\duties \wedge \rights})$ constructed out of $\A'_\env$, $\A_{\duties}$, and $\A_{\rights}$, where $\A'_\env$ captures $\llbracket env \rrbracket$. 

\smallskip
\smallskip
\noindent\textbf{Step 2.} Compute $\wina_{\rights \wedge \fr}$. Build \DA $\A_\fd = (2^{\X \cup \Y}, Q_\fd, I_\fd, $ $\delta_\fd, \reach(R_\fd))$ of $\fdutytask$, and \DA $\A_\fr = (2^{\X \cup \Y}, Q_\fr, I_\fr, \delta_\fr, \reach(R_\fr))$ of $\frightstask$. 
In order to synthesize an agent strategy that enforces $\fdutytask$ and is right-aware $\frightstask$ after $h$, we need to run $\A_{\duties \wedge \rights}$ on $h$ to obtain a new \DA $\A'_{\duties \wedge \rights}$ that differs from $\A_{\duties \wedge \rights}$ only on the initial state, and $\A_{\duties \wedge \rights, \afh} = (2^{\X \cup \Y}, Q_{\duties \wedge \rights}, I_{\duties \wedge \rights,\afh}, \delta_{\duties \wedge \rights}, \reach(R))$, where $I_{\duties \wedge \rights, \afh} = \delta(I_{\duties \wedge \rights}, h)$. 
Clearly, if $\run(\A_{\duties \wedge \rights}, h)$ does not visit states in $\wina_{\rights}$ only, return ``unrealizable", since every agent strategy that is compatible with $h$ cannot enforce $\dutytask$ and be right-aware for $\rightstask$, thus $\hP$ is simply unrealizable, and so $\wina_{\rights \wedge \fr} = \emptyset$. 

Otherwise, we continue as follows. 
Since the final agent strategy should be able to guide the play to reach $R_{\duties}$ and $R_\fd$, and always able to reach also $R_\rights$ and $R_\fr$, we take the product of $\A_{\duties \wedge \rights, \afh}$, $\A_\fd$ and $\A_\fr$ into
$\A = (2^{\X \cup \Y}, Q, I, \delta, \reach(R))$, where $Q = Q_{\duties \wedge \rights} \times Q_\fd \times Q_\fr$, $I = (I_{\duties \wedge \rights, \afh}, I_\fd, I_\fr)$, $\delta((q_1,q_2,q_3), X\cup Y) = (\delta_{\duties \wedge \rights}(q_1,X \cup Y), \delta_\fd(q_2,X \cup Y), \delta_\fr(q_3,X \cup Y))$.
Furthermore, $\delta((q_1,q_2,q_3), X\cup Y) = \text{ undefined}$, if $\delta_{\duties \wedge \rights}(q_1,X \cup Y) = \text{ undefined}$. 
Finally, $R = R_{\duties} \cap R_\rights \cap R_\fd \cap R_\fr$. 
We now solve a reachability game on $\A$ for the agent via a least fixpoint computation, to obtain $\wina_{\rights \wedge \fr} = \bigcup_{0 \leq i \leq m} \wina_{\rights \wedge \fr}^i$. If $I \not\in \wina_{\rights \wedge \fr}$, i.e., $\A$ does not have an agent winning strategy, return ``unrealizable".

\begin{lemma}\label{lem:enriched-realizability}
Let $\hat{\P}$ be a problem of \LTLf synthesis for further duties and rights, and $\wina_{\rights \wedge \fr}$ the agent winning region of the reachability game on $\A = (2^{\X \cup \Y}, Q, I, \delta, \reach(R))$ computed as above. $\hat{\P}$ is realizable iff $I \in \wina_{\rights \wedge \fr}$.
\end{lemma}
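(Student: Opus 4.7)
My plan is to mirror the structure of the proof of Lemma~\ref{lem:realizability}, splitting the argument into the two implications, but carefully handling the two-phase nature of the problem: the segment up to history $h$, which is governed by the product $\A_{\duties \wedge \rights}$ built in Step~1, and the segment after $h$, which is governed by the reachability game on $\A$ from Step~2. The preliminary check in Step~2 that $\run(\A_{\duties \wedge \rights}, h)$ stays in $\wina_\rights$ (otherwise we already return unrealizable) means I can assume the pre-$h$ segment is under control and focus on analysing what happens after $h$.

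For the $(\Leftarrow)$ direction, I would start from $I \in \wina_{\rights \wedge \fr}$ and extract an agent strategy $\hat\stag$ in the game on $\A$ that forces any compatible play to reach $R = R_\duties \cap R_\rights \cap R_\fd \cap R_\fr$. Using Lemma~\ref{lem:env-after-h} to relate $\llbracket env, \afh \rrbracket$ to $\llbracket env \rrbracket^h$, I would then define $\stag$ to replay the agent moves recorded in $h$ up to step $|h|$ and switch to $\hat\stag$ afterwards. The proof that $\stag$ witnesses realizability of $\hP$ decomposes into four items matching the definition: enforcement of $\dutytask$ and of $\fdutytask$ after $h$ is immediate from reaching $R$; right-awareness for $\rightstask$ with respect to a prefix $l$ with $|l| \le |h|$ follows because $\run(\A_{\duties \wedge \rights}, l)$ stays in $\wina_\rights$ by Step~1, so the transducer $\T_\rights$ of Section~\ref{sec:3-1} yields $\stag_l \rhd_l \dutytask \wedge \rightstask$; and right-awareness for $\frightstask$ after $h$ for a prefix $l$ extending $h$ follows because, by construction, $\run(\A, l)$ stays in $\wina_{\rights \wedge \fr}$, so an analogous transducer on $\A$ yields $\stag_l \rhd_\afh \fdutytask \wedge \frightstask$.

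For the $(\Rightarrow)$ direction, I would argue by contrapositive. If $I \notin \wina_{\rights \wedge \fr}$, determinacy of reachability games gives an environment winning strategy from $I$ in $\A$, i.e., a $\stenv \in \llbracket env, \afh \rrbracket$ (and, via Lemma~\ref{lem:env-after-h}, a corresponding $\stenv' \in \llbracket env \rrbracket^h$) against which no agent strategy can simultaneously satisfy $\dutytask \wedge \rightstask$ and $\fdutytask \wedge \frightstask$ after $h$. Taking $l = h$ in the right-awareness requirement for $\frightstask$ after $h$, any purported witness $\stag_l \rhd_\afh \fdutytask \wedge \frightstask$ would itself induce an agent winning strategy from $I$ in $\A$, contradicting $I \notin \wina_{\rights \wedge \fr}$. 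Hence no such $\stag$ exists and $\hP$ is unrealizable.

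The main obstacle I expect is the bookkeeping needed to keep the two different enforcement notions, $\rhd_h$ (\emph{with respect to}) for the original duties and rights and $\rhd_\afh$ (\emph{after}) for the new ones, cleanly separated, while showing that a single strategy $\stag$ witnesses both simultaneously. The translation between the environment strategy sets $\llbracket env \rrbracket^h$ and $\llbracket env, \afh \rrbracket$ provided by Lemma~\ref{lem:env-after-h} is what glues the two phases together, and I would make sure to invoke it explicitly whenever passing between strategies in the pre-$h$ and post-$h$ parts of the play.
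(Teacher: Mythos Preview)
Your plan tracks the paper's proof closely, and the $(\Leftarrow)$ direction is essentially the same: the paper is terser (it simply reuses $\stag$ itself as the witness $\stag_l$ at every prefix, rather than splitting into the cases $|l|\le|h|$ and $|l|>|h|$), but your more detailed case analysis is sound and arguably cleaner.

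There is, however, a genuine gap in your $(\Rightarrow)$ direction. You claim that a witness $\stag_l \rhd_\afh \fdutytask \wedge \frightstask$ at $l=h$ ``would itself induce an agent winning strategy from $I$ in $\A$''. But the reachability target of $\A$ is $R = R_\duties \cap R_\rights \cap R_\fd \cap R_\fr$, not just $R_\fd \cap R_\fr$; a strategy that enforces $\fdutytask \wedge \frightstask$ after $h$ need not hit $R_\duties \cap R_\rights$ at the same instant, so it need not win the game on $\A$. The paper closes this by invoking \emph{both} right-awareness clauses of Definition~\ref{def:enriched-synthesis} together: if $\hP$ were realizable, then at $l=h$ the agent can decide to pursue both $\rightstask$ and $\frightstask$, yielding a single strategy that enforces $\dutytask \wedge \rightstask$ with respect to $h$ \emph{and} $\fdutytask \wedge \frightstask$ after $h$; that combined strategy is what wins the reachability game on $\A$ and produces the contradiction. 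You need to bring the first bullet (right-awareness for $\rightstask$ wrt $h$) into the contrapositive, not just the second.

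A smaller point: in the degenerate branch where $\run(\A_{\duties \wedge \rights}, h)$ leaves $\wina_\rights$ and the construction sets $\wina_{\rights \wedge \fr}=\emptyset$, you assert ``otherwise we already return unrealizable'' but do not argue why $\hP$ is indeed unrealizable in that branch. The paper treats this as an explicit case in its contrapositive; you should do likewise, again via the first bullet of Definition~\ref{def:enriched-synthesis}.
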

\begin{proof}
We prove the lemma in both directions.
\item{($\Leftarrow$)} We need to show that if $I \in \wina_{\rights \wedge \fr}$, then $\hP$ is realizable.
By construction, $I \in \wina_{\rights \wedge \fr}$ shows that there exists an agent strategy $\stag$ such that, for every $\stenv \in \llbracket env, \afh \rrbracket$, $\pi = \trace(\stenv, \stag)$ is such that $\pi \in \L(\A)$.
That is to say, $\pi^k \models \fdutytask \wedge \frightstask$ for some $k \geq 0$, thus it also holds that $\pi^k \models \fdutytask$. Moreover, since $I = (I_{\duties \wedge \rights, \afh}, I_\fd, I_\fr)$, where $I_{\duties \wedge \rights, \afh} = \delta_{\duties \wedge \rights}(I, h)$, it also holds that $h \cdot \pi^k \models \dutytask \wedge \rightstask$, and thus $h \cdot \pi^k \models \dutytask$. Moreover, for every prefix $l$ of $\play(\stenv, \stag)$ that has $h$ as a prefix, we can construct an agent strategy $\stag_l$ that works exactly the same as $\stag$, which indeed enforces $\rightstask$ with respect to $h$, and enforces $\frightstask$ after $h$.

\item{($\Rightarrow$)} We prove by contradiction. If $I \not\in \wina_{\rights \wedge \fr}$, then either $\wina_{\rights \wedge \fr} = \emptyset$, i.e., $\run(\A_{\duties \wedge \rights}, h)$ does not visit states in $\wina_{\rights}$ only, then $\hP$ is simply unrealizable; or there does not exist an agent winning strategy of reachability game on $\A$. Therefore, suppose the agent decides to achieve both rights $\rightstask$ and $\frightstask$ immediately after history $h$, then the agent does not have a strategy that enforces $\dutytask \wedge \rightstask$ with respect to history $h$, and $\fdutytask \wedge \frightstask$ after $h$. Hence, $\hP$ is unrealizable.
\end{proof}

\noindent\textbf{Step 3.} Compute strategy $\stag$. Note that strategy $\stag$ needs to lead the play to reach $R_{\duties}$ and $R_\fd$ by visiting states in $\wina_{\rights\wedge \fr}$ only. Moreover, if the agent already decides to achieve $\rightstask$ along $h$, here the strategy $\stag$ should lead the play to reach, instead, $R_{\duties} \cap R_\fd \cap R_\rights$, by visiting states in $\wina_{\rights\wedge \fr}$ only. The following computation focuses on the former case~(computing a strategy for the latter case is similar).
Therefore, we can build a new \DA with reachability-safety condition $\A_1 = (2^{\X \cup \Y}, Q, I, \delta, \reachsafe(R_{\duties} \cap R_\fd, \wina_{\rights \wedge \fr}))$ out of $\A = (2^{\X \cup \Y}, Q, I, \delta, \reach(R))$, and solve it by reducing to a reachability game, which is analogous to the construction presented in Section~\ref{sec:3-1}. 

Then we solve the reduced reachability game via a least fixpoint computation and obtain $\wina = \bigcup_{0 \leq j \leq n} \wina^j$. Note that $I \in \wina$ indeed holds, which is guaranteed by the reachability game for computing $\wina_{\rights \wedge \fr}$ in the previous step. We now define a strategy generator that starts serving after $h$, based on 
$\wina = \bigcup_{0 \leq j \leq n} \wina^j$, 
represented as a transducer 
$\hat{\T} = (2^{\X \cup \Y}, Q, I, \varrho, \tau)$, where
\begin{itemize}
    \item
    $2^{\X \cup \Y}$, $Q$ and $I$ are the same as in $\A$;
    \item
    $\varrho: Q \times 2^{\X} \rightarrow 2^Q$ is the transition function s.t.
    $\varrho (q, X) = \{q' \mid q' = \delta(q, X \cup Y) \text{ and } Y \in \tau(q)\};$
	\item 
	$\tau: Q \times 2^\X \rightarrow 2^{2^{\Y}}$ is the output function s.t.
	$\forall X \in 2^\X$, $\tau(q, X) = \{ Y \mid \delta(q, X \cup Y) \in \wina^{j} \}$ if $q \in (\wina^{j+1} \backslash \wina^{j})$, otherwise $\tau(q, X) = 2^\Y$.
\end{itemize}

The transducer $\hat{\T}$, together with history $h$, generates an agent strategy $\stag: (2^\X)^+ \to 2^{\Y}$ as follows: $\forall \xi^k \in (2^{\X})^+$ that is compatible with $h = (X_0 \cup Y_0) (X_1 \cup Y_1) \dots (X_i \cup Y_i) \in (2^{\X \cup \Y})^*$,

\noindent$\stag(\xi^k) = 
\begin{cases}
    Y_k & \text{ if } 0 \leq k \leq i, \\
    \stopact & \text{ if } \run(\A, \pi^{k-1}) \text{ visited } R_{\duties} \cap R_\fd, \\
    \tau(q_k, X_k) & \text{ otherwise. }
\end{cases}
$
where $\run(\A, \pi^{k-1}) = q_0q_1q_2 \ldots q_k$ such that $q_0 = I$, and $\pi^{k-1} = (X_0 \cup Y_0) (X_1 \cup Y_1) \ldots (X_{k-1} \cup Y_{k-1})$. Intuitively, given a history $h \in (2^{\X \cup \Y})^*$, the final strategy $\stag$ is generated by first following $h$, and starts taking $\hat{\T}$ by choosing suitable agent action to enforce the play to get closer to $R_{\duties} \cap R_\fd$, then keeps playing $\stopact$ right after visiting $R_{\duties} \cap R_\fd$.

\begin{lemma}\label{lem:enriched-synthesis}
    Let $\hat{\P}$ be a problem of \LTLf synthesis for further duties and rights, and $\hat{\T}$ be constructed as above. Any strategy returned by $\hat{\T}$ solves the synthesis of $\hat{\P}$.
\end{lemma}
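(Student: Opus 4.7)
The plan is to follow the same template as the proof of Lemma~\ref{lem:synthesis}, but taking care to split the analysis between the prefix of the play coinciding with $h$ and its suffix generated by the transducer $\hat{\T}$. I would fix an arbitrary strategy $\stag$ produced by $\hat{\T}$ together with $h$, and an arbitrary $\stenv \in \llbracket env \rrbracket$ whose induced play has $h$ as a prefix (otherwise the obligations are vacuous). By construction, $\play(\stenv, \stag)$ replays $h$ on its first $|h|$ steps and then follows the fixpoint-guided choices dictated by $\tau$ inside the reachability--safety game $\A_1$. Then I would check each of the two clauses of Definition~\ref{def:enriched-synthesis} separately, using the soundness of Step~2 (which, by Lemma~\ref{lem:enriched-realizability}, guarantees that $\run(\A_{\duties \wedge \rights}, h)$ stays in $\wina_{\rights}$ and that $I \in \wina_{\rights \wedge \fr}$) as the common backbone.

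For the first clause, I would argue that the fixpoint $\wina = \bigcup_j \wina^j$ on $\A_1$ forces the play to visit $R_{\duties} \cap R_{\fd}$ while remaining inside $\wina_{\rights \wedge \fr}$, hence $\play(\stenv, \stag) \models \dutytask$. Right-awareness for $\rightstask$ wrt $h$ follows by a witness construction analogous to that in Lemma~\ref{lem:strategy-enforce-with-h}: for every prefix $l$ of $\play(\stenv, \stag)$, the state reached in the product lies in $\wina_{\rights \wedge \fr} \subseteq \wina_{\rights}$, so one can paste $l$ together with a winning strategy of the reachability game on $\A_{\duties \wedge \rights}$ restarted at that state to produce $\stag_l \ \rhd_l \ \dutytask \wedge \rightstask$. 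The second clause is handled in a parallel way: the product $\A$ is synchronized with the instant $|h|$ via the shifted initial state $I_{\duties \wedge \rights, \afh}$, as justified by the construction of $\A'_{\env,\afh}$ and Lemma~\ref{lem:env-after-h}, so reaching $R_{\duties} \cap R_{\fd}$ gives $\play(\stenv, \stag), |h| \models \fdutytask$; and from any prefix $l$ of $\play(\stenv, \stag)$ having $h$ as a prefix, the reachability game on $\A$ restarted at $\delta(I, l)$ supplies a witness $\stag_l \ \rhd_\afh \ \fdutytask \wedge \frightstask$, again because $\delta(I, l) \in \wina_{\rights \wedge \fr}$, this time appealing to Lemma~\ref{lem:strategy-enforce-after-h}.

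The main obstacle I expect is bookkeeping rather than any genuinely new idea: one must verify that the single winning region $\wina_{\rights \wedge \fr}$, whose fixpoint targets the full intersection $R_{\duties} \cap R_{\rights} \cap R_{\fd} \cap R_{\fr}$, simultaneously witnesses both right-awareness obligations, which are phrased relative to two different histories and to the two different enforcement notions $\rhd_h$ and $\rhd_\afh$. This entails checking that projecting the $\A$-winning strategies onto the relevant factors of the product yields the required enforcements, and invoking Lemmas~\ref{lem:strategy-enforce-with-h} and~\ref{lem:strategy-enforce-after-h} with consistent histories. Once these projections are handled, the remainder of the argument reduces to the same determinacy and compatibility reasoning already deployed in Lemma~\ref{lem:synthesis}, giving the claim.
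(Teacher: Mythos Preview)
Your plan is correct and follows essentially the same approach as the paper's proof: fix an arbitrary strategy $\stag$ generated by $\hat{\T}$ and an arbitrary $\stenv \in \llbracket env \rrbracket$ compatible with $h$, argue that the fixpoint-guided moves force the play to reach $R_{\duties} \cap R_{\fd}$ while staying inside $\wina_{\rights \wedge \fr}$, and then extract right-awareness witnesses from the winning region. The paper's proof is slightly terser (it handles prefixes $l$ of $h$ by appealing directly to the Step~2 check that $\run(\A_{\duties \wedge \rights}, h)$ stays in $\wina_{\rights}$, and it lists the three possible combinations of rights the agent may choose rather than explicitly invoking Lemmas~\ref{lem:strategy-enforce-with-h} and~\ref{lem:strategy-enforce-after-h}), but your more explicit treatment of the projection from $\wina_{\rights \wedge \fr}$ to $\wina_{\rights}$ and of the two enforcement notions $\rhd_h$ versus $\rhd_\afh$ is a welcome clarification rather than a different route.
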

\begin{proof}
Let $\stag$ be an arbitrary strategy generated by $\hat{\T}$, i.e., $I \in \wina_{\rights \wedge \fr}$, and $\stenv \in \llbracket env \rrbracket$ be an arbitrary environment strategy that enforces $env$ such that $\play(\stenv, \stag)$ has $h$ as a prefix. By construction, $\play(\stenv, \stag)$ satisfies the following:
\begin{itemize}
    \item 
    $\play(\stenv, \stag) \models \dutytask$ and $\play(\stenv, \stag), |h| \models \fdutytask$, since after $h$, $\stag$ forces $\play(\stenv, \stag)$ to get closer to $R_{\duties} \cap R_\fd$ at every step until reaching $R_{\duties} \cap R_\fd$. Moreover, only after reaching $R_{\duties} \cap R_\fd$, $\stag$ starts playing $\stopact$.
    \item
    for every prefix $l$ of $\play(\stenv, \stag)$, if $l$ is a prefix of $h$, then by definition, there exists an agent strategy $\stag_{l}$ that enforces $\dutytask \wedge \rightstask$ with respect to history $l$, i.e., $\stag_{l} \ \rhd_l \ \dutytask \wedge \rightstask$. Otherwise, $l$ has $h$ as a prefix, then by construction, after $h$, $\play(\stenv, \stag)$ only visits states in $\wina_{\rights \wedge \fr}$. Therefore, for any of the following cases, there exists an expected agent strategy that satisfies the conditions:
    \begin{itemize}
        \item
        enforcing $\dutytask \wedge \rightstask$ with respect to $h$, and $\fdutytask$ after $h$, if the agent chooses to achieve $\rightstask$ only;
        \item
        enforcing $\dutytask$ with respect to $h$, and $\fdutytask \wedge \frightstask$ after $h$, if the agent chooses to achieve $\frightstask$ only;
        \item
        enforcing $\dutytask \wedge \rightstask$ with respect to $h$, and $\fdutytask \wedge \frightstask$ after $h$, if the agent chooses to achieve both $\rightstask$ and $\frightstask$.\qedhere
    \end{itemize}
\end{itemize}
\end{proof}

By the construction described above, if the reachability game on $\A$ does not have an agent winning strategy, then $\hat{\T}$ trivially returns \textit{no strategy}, and indeed by Lemma~\ref{lem:enriched-realizability}, $\hat{\P}$ is unrealizable. 
%
As an immediate consequence of Lemmas~\ref{lem:enriched-realizability}\&\ref{lem:enriched-synthesis}, we have:
\begin{theorem}\label{thm:correctness}
    Let $\hat{\P}$ be a problem of \LTLf synthesis for further duties and rights. Realizability of $\hP$ can be solved by a reduction to a suitable reachability game. Synthesis of $\hP$ can be solved by generating a strategy from $\hat{\T}$, as constructed above. 
\end{theorem}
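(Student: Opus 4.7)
The plan is to observe that this theorem is really the conjunction of the two statements already packaged in Lemmas~\ref{lem:enriched-realizability} and~\ref{lem:enriched-synthesis}, and so the proof will simply assemble those lemmas into the two claims of the theorem. I would split the argument into the realizability part and the synthesis part, handle each separately, and then conclude.

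For the realizability part, I would first recall the three-step construction preceding the theorem: build $\A'_\env$ capturing $\llbracket env\rrbracket$, build the DAs $\A_{\duties}, \A_{\rights}, \A_\fd, \A_\fr$, shift the initial state of $\A_{\duties\wedge\rights}$ along $h$ to obtain $\A_{\duties\wedge\rights,\afh}$, and take the product to get the reachability arena $\A = (2^{\X\cup\Y}, Q, I, \delta, \reach(R))$ with $R = R_\duties \cap R_\rights \cap R_\fd \cap R_\fr$. Then I would invoke Lemma~\ref{lem:enriched-realizability}: $\hat\P$ is realizable iff $I \in \wina_{\rights \wedge \fr}$. Since $\wina_{\rights\wedge\fr}$ is computed by the standard least-fixpoint procedure on the reachability game over $\A$, this establishes that realizability of $\hat\P$ reduces to deciding whether the initial state is in the agent winning region of this single reachability game, which is exactly the desired reduction.

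For the synthesis part, I would appeal to Lemma~\ref{lem:enriched-synthesis}, which states that any strategy returned by the transducer $\hat\T$ solves $\hat\P$. I would remark that $\hat\T$ is constructed directly from the approximants $\wina^j$ of the fixpoint computation used in the realizability step, so whenever $I \in \wina_{\rights\wedge\fr}$ the transducer is well-defined and produces a strategy; and whenever $I \notin \wina_{\rights\wedge\fr}$, $\hat\T$ returns no strategy, which is consistent with unrealizability by the realizability lemma. Putting the two parts together yields both conclusions of the theorem.

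There is no real obstacle here: the theorem is explicitly stated as an immediate consequence of the two preceding lemmas, which have already done all of the conceptual work (correctness of the shifted product, the passage from the reachability--safety acceptance condition $\reachsafe(R_\duties \cap R_\fd, \wina_{\rights\wedge\fr})$ to a plain reachability condition via Theorem~\ref{thm:reachsafe-as-reach}, and the case analysis on whether the agent has previously decided to pursue $\rightstask$). The only care I would take is to make the ``reduction to a suitable reachability game'' explicit, by pointing out that Step~3's reachability--safety arena is itself reduced to a reachability arena using the construction from Section~\ref{sec:3-1}, so that the entire pipeline from $\hat\P$ to a single reachability game is exhibited in one place.
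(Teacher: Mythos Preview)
Your proposal is correct and matches the paper's own proof, which simply states that the theorem is an immediate consequence of Lemmas~\ref{lem:enriched-realizability} and~\ref{lem:enriched-synthesis}. Your additional unpacking of the construction and the explicit remark about the reachability--safety to reachability reduction are harmless elaborations, but the core argument is identical.
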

\begin{proof}
Immediate consequence of Lemmas~\ref{lem:enriched-realizability}\&\ref{lem:enriched-synthesis}.
\end{proof} 

\begin{theorem}\label{thm:enriched-realizability-complexity}
    Let $\hat{\P}$ be a problem of \LTLf synthesis for further duties and rights. Realizability of $\hP$ is 2EXPTIME-complete.
\end{theorem}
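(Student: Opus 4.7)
The plan is to mirror the argument of Theorem~\ref{thm:realizability-complexity}, splitting into membership and hardness, and reusing the constructive machinery already developed in Steps~1--3 of the synthesis technique for $\hat{\P}$.

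For membership, I would argue that the dominant cost is the translation of the five \LTLf inputs $env$, $\dutytask$, $\rightstask$, $\fdutytask$, $\frightstask$ into deterministic automata, each of which takes double-exponential time in its formula size and yields a \DA of double-exponential size. Once these automata are in hand, every subsequent operation in Steps~1--3 is polynomial in the sizes of the resulting automata: the safety game on $\A_\env$ and the restriction to $\wine$ (from Section~\ref{sec:pre}); the products $\A_{\duties \wedge \rights}$ and $\A = \A_{\duties \wedge \rights,\afh} \times \A_\fd \times \A_\fr$, whose state spaces are just the Cartesian products; running $\A_{\duties \wedge \rights}$ on $h$ to relocate the initial state; and the final reachability game solved by a least fixpoint computation, which runs in linear time in the size of the game arena (and via Theorem~\ref{thm:reachsafe-as-reach} covers the reachability--safety case as well). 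Composing, the overall bound stays double-exponential in $|env| + |\dutytask| + |\rightstask| + |\fdutytask| + |\frightstask| + |h|$, hence in 2EXPTIME.

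For hardness, I would invoke the fact that standard \LTLf synthesis is 2EXPTIME-complete~\cite{DegVa15}, and observe that it is a trivial special case of our problem: take $h = \epsilon$, let $\rightstask = \frightstask = \fdutytask = \true$, and let $env = \true$ (or any trivially environment-realizable safety specification). Under this choice, being right-aware for $\true$ is vacuous, the ``after $h$'' conditions collapse to the ``from the beginning'' conditions, and realizability of $\hat{\P}$ reduces to realizability of plain \LTLf synthesis for $\dutytask$. Therefore the lower bound transfers.

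The main obstacle is not the complexity argument itself, which is essentially bookkeeping, but convincing the reader that the extra ingredients introduced in Section~\ref{sec:further-d-r} (namely the history $h$, the relocated initial state $I_{\duties \wedge \rights, \afh}$, and the additional product with $\A_\fd$ and $\A_\fr$) do not blow up the complexity class. The key observation to emphasize is that relocating the initial state of $\A_{\duties \wedge \rights}$ by running it on $h$ is linear in $|h|$ and does not enlarge the automaton, while the extra product only multiplies the state space by the sizes of $\A_\fd$ and $\A_\fr$, both of which are already accounted for in the double-exponential bound induced by the \LTLf-to-\DA translation.
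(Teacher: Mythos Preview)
Your proposal is correct and follows essentially the same approach as the paper: membership via the 2EXPTIME cost of the \LTLf-to-\DA translations with all remaining operations polynomial in the automata, and hardness by specializing to standard \LTLf synthesis. Your treatment is in fact more explicit than the paper's (you spell out the trivializing choice of $h$, $env$, $\rightstask$, $\fdutytask$, $\frightstask$ and note the linear dependence on $|h|$), but the underlying argument is the same.
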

\begin{proof}
We prove from the following two aspects.
\item{Membership.}
Constructing the automata from \LTLf formulas $env, \dutytask, \rightstask, \fdutytask$ and $\frightstask$ takes 2EXPTIME. 
The final reachability game can be constructed in polynomial time in the size of the DAs $\A_{e}$, $\A_{\duties}$, $\A_{\rights}$, $\A_\fd$, and $\A_\fr$. Solving the final reachability game can be done in linear time in the size of the game arena.

\item{Hardness.}
 Immediate from 2EXPTIME-completeness of \LTLf synthesis itself~\cite{DegVa15}.
\end{proof} 

\begin{theorem}\label{thm:enriched-synthesis-complexity}
    Let $\hat{\P}$ be a problem of \LTLf synthesis for further duties and rights. Then computing a strategy solving $\hat{\P}$ can take, in the worst case, double-exponential time in $|env|+|\dutytask|+|\rightstask|+|\fdutytask|+|\frightstask|$.
\end{theorem}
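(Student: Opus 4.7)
The plan is to produce a lower-bound argument by embedding plain \LTLf synthesis into the enriched problem, so that the known double-exponential lower bound for the former transfers to the running time of any algorithm that outputs a strategy for the latter. I would first set up the reduction, then invoke the \cite{DegVa15} hardness, and finally argue that lower bounds on realizability (and on strategy size itself) translate to lower bounds on strategy computation.

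First, given any instance $\psi$ of standard \LTLf synthesis over the partition $(\X,\Y)$, I would define the enriched instance $\hP_\psi = (\true,\psi,\true,\true,\true,\epsilon)$: the environment specification is trivial, the duties are $\psi$, while rights, further duties, and further rights are all $\true$, and the history at which further tasks arrive is empty. By Definition~\ref{def:enriched-synthesis}, an agent strategy $\stag$ solves $\hP_\psi$ iff, against every unconstrained environment, $\play(\stenv,\stag)\models \psi$, since the two right-awareness clauses and the after-$h$ clause are vacuously met by taking $\stag_l=\stag$ and observing that any stopping play satisfies $\true$. Hence $\stag$ solves $\hP_\psi$ iff $\stag$ is a winning strategy for the plain \LTLf synthesis instance $\psi$, and $|env|+|\dutytask|+|\rightstask|+|\fdutytask|+|\frightstask|=O(|\psi|)$, so the reduction preserves parameter size up to a constant.

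Second, I would invoke the 2EXPTIME-hardness of standard \LTLf synthesis from~\cite{DegVa15}: there is a family $\{\psi_n\}$ with $|\psi_n|$ polynomial in $n$ such that deciding realizability of $\psi_n$ requires time double-exponential in $n$ in the worst case. Combined with the reduction above, this yields a family $\{\hP_{\psi_n}\}$ of enriched instances on which the realizability question alone already requires double-exponential time. Any algorithm $\A$ that, on input $\hP$, outputs a witnessing strategy whenever one exists also decides realizability (by checking whether $\A$ returned a strategy at all), so any upper bound $T(n)$ on the runtime of $\A$ is an upper bound on realizability modulo an additive constant. The 2EXPTIME lower bound therefore forces $T(n)$ to be at least double-exponential in $n$ in the worst case, which is exactly the claim of the theorem.

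The subtle point — and the step I would expect to require the most care — is justifying that the trivial choices of $\rightstask$, $\fdutytask$, and $\frightstask$ do not accidentally make $\hP_\psi$ strictly easier than plain synthesis of $\psi$, which would void the reduction. This is handled by noting that $\true$ is enforceable with respect to, and after, every history by any stopping strategy, so none of the enriched clauses impose a new constraint or offer a shortcut, and the set of strategies solving $\hP_\psi$ coincides exactly with the winning strategies for $\psi$. As a complementary information-theoretic reinforcement, one can note that \LTLf winning strategies can require double-exponentially many memory states even to represent, so in the worst case the output itself is of double-exponential size and the bound is unavoidable on any computational model.
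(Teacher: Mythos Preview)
Your argument is a correct and carefully spelled-out \emph{lower-bound} proof: you embed plain \LTLf synthesis as the special case $\hP_\psi=(\true,\psi,\true,\true,\true,\epsilon)$, check that the right-awareness and after-$h$ clauses become vacuous, and then transfer the 2EXPTIME lower bound of~\cite{DegVa15} from realizability to strategy computation. That reduction is sound, and your remark that any strategy-producing procedure in particular decides realizability is the right bridge.

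The paper, however, does something rather different and much terser. Its proof is literally ``Immediate consequence of the construction of $\hat{\T}$ and Theorem~\ref{thm:enriched-realizability-complexity}.'' That is, the paper reads the statement primarily as an \emph{upper bound} on the running time of the concrete synthesis procedure just described: the automata for $env,\dutytask,\rightstask,\fdutytask,\frightstask$ are each at most doubly exponential, the products and the two fixpoint computations are polynomial in those automata, and building $\hat{\T}$ adds nothing. Tightness is then inherited wholesale from Theorem~\ref{thm:enriched-realizability-complexity}, whose hardness direction is itself the one-line reduction to plain \LTLf synthesis that you unfold explicitly. So your reduction is essentially the hardness half already packaged inside Theorem~\ref{thm:enriched-realizability-complexity}; you re-derive it directly rather than citing it.

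The practical difference: the paper's route gives both directions at once (the algorithm runs in 2EXPTIME \emph{and} this is unavoidable) by pointing to the construction and to the prior completeness theorem; your route makes the lower bound self-contained and transparent but says nothing about the upper bound. If the theorem is read as a two-sided claim, you would still need one sentence noting that the transducer $\hat{\T}$ built in Step~3 is obtained in time polynomial in the product automaton, hence doubly exponential overall.
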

\begin{proof}
Immediate consequence of the construction of $\hat{\T}$ and Theorem~\ref{thm:enriched-realizability-complexity}.
\end{proof} 

Analogously to Section~\ref{sec:3-1}, given problem $\hP= (env, \dutytask, \rightstask, \fdutytask, \frightstask, h)$, suppose we have a strategy $\stag$ for $\hP$, and while executing $\stag$, the agent wants to satisfy also its rights $\rightstask$ or $\frightstask$~(maybe both). Then we can again, base on the immediate results obtained from the construction above, to construct three transducers $\hat{\T}_{\rights}$, $\hat{\T}_{\fr}$ and $\hat{\T}_{\rights \wedge \fr}$, corresponding to agent options of achieving also $\rightstask$ only, $\frightstask$ only, and $\rightstask$ together with $\frightstask$, following the construction described in Section~\ref{sec:3-1}. Indeed, if the agent already decides to satisfy $\rightstask$ along $h$, i.e., before further duties and rights arrive, then the agent needs to stay with its choice of satisfying $\rightstask$. In this case, when further duties and rights arrive, the agent can only choose whether to satisfy also further rights $\frightstask$. 

We extend the cleaning robot example and demonstrate how the robot deal with further duties and rights.
\begin{example}
Suppose the cleaning robot decides to achieve also its new rights ``emptying the garbage collector" $\frightstask = \Diamond (\textit{Collector\_Empty})$ when being on its way to the charging station after cleaning rooms A and C. In other words, the robot already made the decision of achieving its rights ``fully charging battery", so the current strategy indeed comes from $\hat{\T}_{\rights}$. Let us assume that $\hat{\T}_{\rights \wedge \fr}$ is not prepared in advance and by now the running history is $h'$. Note that $h'$ is different from the running history $h$ generated when the robot receives further duties and rights. The robot will now compute $\hat{\T}_{\rights \wedge \fr}$, and choose a strategy $\sigma_{h'}$ out of $\hat{\T}_{\rights \wedge \fr}$ that allows it to enforce $\dutytask \wedge \rightstask$ with respect to $h$, and $\fdutytask \wedge \frightstask$ after $h$.
\end{example}

It should be noted that, one can generalize the problem of \LTLf synthesis with further duties and rights to allow multiple further duties and rights. In this case, computing such transducers in advance might lead to a tradeoff, since there can be an exponential number of agent options of choosing which rights to achieve. Therefore, it might be better to just keep the winning regions and compute the strategy for achieving the chosen rights only if and when the agent demands it.

\section{Conclusion}

We have studied synthesis for duties respecting rights. We have shown that we can actually compute such strategies with a small overhead wrt to the state-of-the-art \LTLf synthesis techniques~\cite{ZTLPV17,BLTV,GiacomoF21}.
We can do so by enriching the arena to contain also the information needed to handle the rights.
%
For simplicity, we have considered a single duties specification and a single rights specification at the time. Considering multiple duties specifications simultaneously actually is like considering as duties the conjunction of the duties specifications.
However, considering multiple rights specifications would require to consider satisfying arbitrary subsets of rights, as chosen by the agent. This can still be done with our techniques, though precomputing solutions as in Section~\ref{sec:synthesis-rights} can lead to a combinatorial explosion. In fact, our solution to handle further duties and rights in Section~4 can be already applied to handle multiple duties and rights as well, by considering as history  the empty history. Note that the technique presented there also handle contradicting rights, i.e., rights that cannot actually be satisfied simultaneously (but see the discussion at the end of Section~4). 
These extensions tighten up even more the connection with Deontic Logic, in particular in combination with actions~\cite{GabbayForthcoming-GABHOD-2}. We leave though exploring this connection to future work.

\section*{Acknowledgements}
This work is partially supported by the ERC Advanced Grant WhiteMech (No. 834228), the EU ICT-48 2020 project TAILOR (No. 952215), the PRIN project RIPER (No. 20203FFYLK), the JPMorgan AI Faculty Research Award "Resilience-based Generalized Planning and Strategic Reasoning”


\bibliographystyle{kr}
\bibliography{references}
 
\end{document}